\newtheorem{theorem}{Theorem}[section]
\newtheorem{definition}[theorem]{Definition}
\newtheorem{lemma}[theorem]{Lemma}
\DeclareMathOperator*{\argmin}{arg\,min}
\newenvironment{proof}{\paragraph{Proof:}}{\hfill$\square$}
\title{Manifold Denoising by Nonlinear Robust Principal Component Analysis}
\author{
  He Lyu,\;Ningyu Sha,\;Shuyang Qin,\;Ming Yan,\;Yuying Xie,\;Rongrong Wang
 \\
  Department of Computational Mathematics, Science and Engineering\\
  Michigan State University\\
  \{lyuhe,shaningy,qinshuya,myan,xyy,wangron6\}@msu.edu
}
\begin{document}

\maketitle

\begin{abstract}
 This paper extends robust principal component analysis (RPCA) to nonlinear manifolds. Suppose that the observed data matrix is the sum of a sparse component and a component drawn from some low dimensional manifold. Is it possible to separate them by using similar ideas as RPCA? Is there any benefit in treating the manifold as a whole as opposed to treating each local region independently?  We answer these two questions affirmatively by proposing and analyzing an optimization framework that separates the sparse component from the manifold under noisy data. Theoretical error bounds are  provided when the tangent spaces of the manifold satisfy certain incoherence conditions.  We also provide a near optimal choice of the tuning parameters for the proposed optimization formulation with the help of a new curvature estimation method. The efficacy of our method is demonstrated on both synthetic and real datasets. 
\end{abstract}

\section{Introduction}
 Manifold learning and graph learning are nowadays widely used in computer vision, image processing, and biological data analysis on tasks such as classification, anomaly detection, data interpolation, and denoising.  In most applications, graphs are learned from the high dimensional data and used to facilitate traditional data analysis methods such as PCA, Fourier analysis, and data clustering ~\citep{hammond_2011, Shi_2000, jiang_2013, meila_2001, little2018analysis}. However, the quality of the learned graph may be greatly jeopardized by outliers which cause instabilities in all the aforementioned graph assisted applications.

In recent years,  several methods have been proposed to handle outliers in nonlinear data~\citep{li_2009, Tang_2010, du_2013}. Despite the success of those methods, they only aim at detecting the outliers instead of correcting them. In addition, very few of them are equipped with theoretical analysis of the statistical performances. In this paper, we propose a novel non-task-driven algorithm for the mixed noise model in~\eqref{eq:sample_model} and provide theoretical guarantees to control its estimation error. Specifically, we consider the mixed noise model as 
 \begin{equation}\label{eq:sample_model}
 \tilde{X}_i = X_i+S_i+E_i, \quad i =1,\ldots,n,
 \end{equation}
where $X_i \in \mathbb{R}^p$ is the noiseless data independently drawn from some manifold $\mathcal{M}$ with an intrinsic dimension $d \ll p$, $E_i$ is the i.i.d. Gaussian noise with small magnitudes,  and $S_i$ is the sparse noise with possibly large magnitudes. If  $S_i$ has a large entry, then the corresponding $\tilde{X}_i$ is usually considered as an outlier. The goal of this paper is to simultaneously recover  $X_i$ and $S_i$ from $\tilde{X}_i$, $i=1,..,n$.

There are several benefits in recovering the noise term $S_i$ along with the signal $X_i$. First, the support of $S_i$ indicates the locations of the anomaly, which is informative in many applications. For example, if $X_i$ is the gene expression data from the $i$th patient, the nonzero elements in $S_i$ indicate the differentially expressed genes that are the candidates for personalized medicine. Similarly, if $S_i$ is a result of malfunctioned hardware, its nonzero elements indicate the locations of the malfunctioned parts. Secondly, the recovery of $S_i$ allows the “outliers” to be pulled back to the data manifold instead of simply being discarded. This prevents a waste of information and is especially beneficial in cases where data is insufficient. Thirdly, in some applications, the sparse $S_i$ is a part of the clean data rather than a noise term, then the algorithm provides a natural decomposition of the data into a sparse and a non-sparse component that may carry different pieces of information.

Along a similar line of research, Robust Principle Component Analysis (RPCA)~\cite{candes_2011} has received considerable attention and has demonstrated its success in separating data from sparse noise in many applications. However, its assumption that the data lies in a low dimensional subspace is somewhat strict. In this paper, we generalize the Robust PCA idea to the non-linear manifold setting. The major new components in our algorithm are: 1) an incorporation of the manifold curvature information into the optimization framework, and 2) a unified way to apply RPCA to a collection of tangent spaces of the manifold.

\section{Methodology}
Let $\tilde{X}=[\tilde{X}_1, \ldots,\tilde{X}_n] \in \mathbb{R}^{p\times n}$ be the noisy data matrix containing $n$ samples. Each sample is a vector in $\mathbb{R}^p$  independently drawn from $\eqref{eq:sample_model}$.  The overall data matrix $\tilde{X}$ has the representation
\[
\tilde{X} = X + S + E
\]
where $X$ is the clean data matrix, $S$ is the matrix of the sparse noise, and $E$ is the matrix of the Gaussian noise. We further assume that the clean data $X$ lies on some manifold $\mathcal{M}$ embedded in $\mathbb{R}^p$ with a small intrinsic dimension $d\ll p$ and the samples are sufficient ($n\geq p$).  The small intrinsic dimension assumption ensures that data is locally low dimensional so that the corresponding local data matrix is of low rank. This property allows the data to be separated from the sparse noise.

The key idea behind our method is to handle the data locally. We use the $k$ Nearest Neighbors ($k$NN)  to construct local data matrices, where $k$ is larger than the intrinsic dimension $d$. For a data point $X_i \in \mathbb{R}^p$, we define the local patch centered at it to be the set consisted of its $k$NN and itself, and a local data matrix $X^{(i)}$ associated with this patch is $X^{(i)}=[X_{i_1}, X_{i_2}, \ldots,X_{i_k},X_i]$, where $X_{i_j}$ is the $j$th-nearest neighbor of $X_i$. Let  $\mathcal{P}_i$ be the restriction operator to the $i$th patch, i.e., $\mathcal{P}_i(X)=XP_i$ where $P_i$ is the $n\times (k+1)$ matrix that selects the columns of $X$ in the $i$th patch. Then $X^{(i)}=\mathcal{P}_i(X)$. Similarly, we define $S^{(i)}=\mathcal{P}_i(S)$, $E^{(i)}=\mathcal{P}_i(E)$ and $\tilde X^{(i)}=\mathcal{P}_i(\tilde X)$. 

Since each local data matrix $X^{(i)}$ is nearly of low rank and $S$ is sparse, we can decompose the noisy data matrix into low-rank parts and sparse parts through solving the following optimization problem 
\begin{align}\label{eq:form1}
\{\hat{S}, \{\hat{S}^{(i)}\}_{i=1}^n, \{\hat{L}^{(i)}\}_{i=1}^n\} & = \argmin\limits_{S,S^{(i)},L^{(i)}} F(S,\{{S}^{(i)}\}_{i=1}^n,\{L^{(i)}\}_{i=1}^n) \notag \\ 
& \equiv \argmin\limits_{S,S^{(i)},L^{(i)}} \sum\limits_{i=1}^n \big(\lambda_i \|\tilde{X}^{(i)}-L^{(i)}-S^{(i)}\|_F^2 +  \|\mathcal{C}(L^{(i)})\|_*+\beta \|S^{(i)}\|_1 \big)\notag\\ 
& \quad~ \textrm{subject to }  S^{(i)} = \mathcal{P}_i (S), 
\end{align}
here we take $\beta = \max\{k+1,p\}^{-1/2}$ as in RPCA, $\tilde{X}^{(i)}=\mathcal{P}_i(\tilde X)$ is the local data matrix on the $i$th patch and $\mathcal{C}$ is the centering operator that subtracts the column mean: $\mathcal{C}(Z) = Z(I-\frac{1}{k+1}\bold{1} \bold{1}^T)$, where $\bold{1}$ is the $(k+1)$-dimensional column vector of all ones. Here we are decomposing the data on each patch into a low-rank part $L^{(i)}$ and a sparse part $S^{(i)}$ by imposing the nuclear norm and entry-wise $\ell_1$ norm on $L^{(i)}$ and $S^{(i)}$, respectively. There are two key components in this formulation: 1). the local patches are overlapping (for example, the first data point $X_1$ may belong to several patches). Thus, the constraint $S^{(i)} = \mathcal{P}_i (S)$ is particularly important because it ensures  copies of the same point on different patches (and those of the sparse noise on different patches) remain the same. 
2). we do not require $L^{(i)}$ to be restrictions of a universal $L$ to the $i$th patch, because the $L^{(i)}$s correspond to the local affine tangent spaces, and there is no reason for a point on the manifold to have the same projection on different tangent spaces.  This seemingly subtle difference has a large impact on the final result.

If the data only contains sparse noise, i.e., $E=0$, then $\hat{X}\equiv \tilde{X}-\hat{S}$ is the final estimation for $X$. If $E\neq 0$, we apply Singular Value Hard Thresholding \cite{6846297} to truncate $\mathcal{C}(\tilde X^{(i)}-\mathcal{P}_i(S))$ and remove the Gaussian noise
 (See \S6), and use the resulting  $\hat{L}^{(i)}_{\tau^*}$ to construct a final estimate $\hat{X}$ of $X$ via least squares fitting
\begin{equation}\label{eq:L}
\hat{X} =  \argmin_{Z\in \mathbb{R}^{p\times n}}  \sum\limits_{i=1}^n \lambda_i \| \mathcal{P}_i (Z) -\hat L^{(i)}_{\tau^*}\|_F^2. 
\end{equation}

The following discussion revolves around~\eqref{eq:form1} and~\eqref{eq:L}, and the structure of the paper is as follows. In \S\ref{sec:geo}, we explain the geometric meaning of each term in~\eqref{eq:form1}. In \S\ref{sec:theo}, we establish theoretical recovery guarantees for~\eqref{eq:form1} which justifies our choice of $\beta$ and allows us to theoretically choose $\lambda$. The calculation of $\lambda$ uses the curvature of the manifold, so in \S\ref{sec:curve}, we provide a simple method to estimate the average manifold curvature and the method is robust to sparse noise. The optimization algorithms that solve \eqref{eq:form1} and~\eqref{eq:L} are presented in \S\ref{sec:opt} and the numerical experiments are in \S\ref{sec:numerical}. 

\section{Geometric explanation}\label{sec:geo}
We provide a geometric intuition for the formulation~\eqref{eq:form1}.  Let us write the clean data matrix $X^{(i)}$ on the $i$th patch in its Taylor expansion along the manifold,
\begin{equation}\label{eq:taylor} 
    X^{(i)}= X_i1^T+T^{(i)}+R^{(i)},
\end{equation}
where the Taylor series is expanded at $X_i$ (the center point of the $i$th patch), $T^{(i)}$ stores the first order term and its columns lie in the tangent space of the manifold at $X_i$, and $R^{(i)}$ contains all the higher order terms. 
The sum of the first two terms $X_i1^T+T^{(i)}$ is the linear approximation to $X^{(i)}$ that is unknown if the tangent space is not given. This linear approximation precisely corresponds to the $L^{(i)}$s in~\eqref{eq:form1}, i.e., $L^{(i)}=X_i1^T+T^{(i)}$. Since the tangent space has the same dimensionality $d$ as the manifold, with randomly chosen points, we have with probability one, that rank$(T^{(i)}) = d$. As a result, rank$(L^{(i)})=$ rank$(X_i1^T+T^{(i)}) \leq d+1$. By the assumption that $d < \min\{p,k\}$,  we know that $L^{(i)}$ is indeed low rank. 

Combing \eqref{eq:taylor} with $\tilde{X}^{(i)} =X^{(i)}+S^{(i)}+E^{(i)}$, we find the misfit term $\tilde{X}^{(i)}-L^{(i)}-S^{(i)}$ in \eqref{eq:form1} equals $E^{(i)}+R^{(i)}$. This implies that the misfit contains the high order residues (i.e., the linear approximation error) and the Gaussian noise. 

\section{Theoretical choice of tuning parameters}\label{sec:theo}
To establish the error bound, we need  a coherence condition on the tangent spaces of the manifold.
\begin{definition}
Let $U\in \mathbb{R}^{m\times r}$ $(m\geq r)$ be a matrix with $U^*U=I$, the coherence of $U$ is defined as
\[
\mu (U) = \frac{m}{r} \max\limits_{k\in \{1,...,m\}} \|U^* \bold{e}_k\|_2^2,
\]
where $\bold{e}_k$ is the $k$th element of the canonical basis.
For a subspace $T$, its coherence is defined as 
\[
\mu (V) = \frac{m}{r} \max\limits_{k\in \{1,...,m\}} \|V^*\bold{e}_k\|_2^2,
\]
where $V$ is an orthonormal basis of $T$. The coherence is independent of the choice of basis.
\end{definition}
The following theorem is proved for local patches constructed using the $\epsilon$-neighborhoods. We use $k$NN in the experiments because $k$NN is more robust to insufficient samples. The full version of Theorem \ref{thm4.2} can be found in the appendix.
\begin{theorem}\label{thm4.2} [succinct version] Let each $X_i \in\mathbb{R}^p$, $i=1,...,n$, be independently drawn from a compact manifold $\mathcal{M}\subseteq \mathbb{R}^p$ with an intrinsic dimension $d$ and endowed with the uniform distribution. Let $X_{i_j}$, $j=1,\ldots,k_i$ be the $k_i$ points falling in an $\eta$-neighborhood of $X_i$ with radius $\eta$, where $\eta>0$ is some fixed small constant.   These points form the matrix $X^{(i)} = [X_{i_1},\ldots,X_{i_{k_i}},X_i]$. For any $q\in \mathcal{M}$, let $T_q$ be the tangent space of $\mathcal{M}$ at $q$ and define $\bar\mu=\sup_{q\in \mathcal{M}} \mu(T_q)$. Suppose the support of the noise matrix $S^{(i)}$ is uniformly distributed among all sets of cardinality $m_i$. 
Then as long as 
$d< \rho_r\min\{\underline{k},p\} \bar\mu^{-1}\log^{-2} \max\{\bar k,p\}$, and $m_i\leq 0.4\rho_s p \underline{k}$ (here $\rho_r$ and $\rho_s$ are positive constants, $\bar k = \max_i k_i$, and $\underline{k}=\min_i k_i$)\;, then
with probability over $1-c_1 n \max\{\underline{k} ,p\}^{-10}-e^{-c_2{\underline{k}}}$ for some constants $c_1$ and $c_2$, the minimizer $\hat{S}$ to~\eqref{eq:form1} with weights
\begin{equation}\label{eq:lambdamu}\lambda_i =\frac{\min\{k_i+1,p\}^{1/2}}{\epsilon_i}, \quad \beta_i =\max\{k_i+1,p\}^{-1/2} 
\end{equation}
has the error bound
\[
\sum_i \|\mathcal{P}_i(\hat{S})-S^{(i)} \|_{2,1} \leq C\sqrt{pn}\bar k\|\epsilon\|_2.
\]
Here $\epsilon_i = \| \tilde{X}^{(i)}-X_i1^T- T^{(i)}-S^{(i)} \|_F$ will be estimated in the next section, $\epsilon=[\epsilon_1,...,\epsilon_n]$, $\|\cdot\|_{2,1}$ stands for taking $\ell_2$ norm along columns and $\ell_1$ norm along rows, and $T^{(i)}$ is the projection of $X^{(i)}-X_i 1^T$ to the tangent space $T_{X_i}$.  \end{theorem}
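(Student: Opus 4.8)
The plan is to run a stability analysis for principal component pursuit one patch at a time and then glue the patchwise estimates together using the consistency constraint $S^{(i)}=\mathcal{P}_i(S)$. First fix the oracle decomposition: on the $i$th patch set $L_0^{(i)}=X_i\mathbf{1}^T+T^{(i)}$, which has rank at most $d+1$ while its centering $\mathcal{C}(L_0^{(i)})=\mathcal{C}(T^{(i)})$ has rank at most $d$ with column space inside $T_{X_i}$; set $S_0^{(i)}=\mathcal{P}_i(S)$; and note the misfit $\tilde X^{(i)}-L_0^{(i)}-S_0^{(i)}=E^{(i)}+R^{(i)}$ has Frobenius norm exactly $\epsilon_i$. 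Since $(S,\{S_0^{(i)}\},\{L_0^{(i)}\})$ is feasible and $(\hat S,\{\hat S^{(i)}\},\{\hat L^{(i)}\})$ is optimal for \eqref{eq:form1},
\[
\sum_i \lambda_i\|\tilde X^{(i)}-\hat L^{(i)}-\hat S^{(i)}\|_F^2+\sum_i\big(\|\mathcal{C}(\hat L^{(i)})\|_*+\beta_i\|\hat S^{(i)}\|_1\big)\le\sum_i\lambda_i\epsilon_i^2+\sum_i\big(\|\mathcal{C}(L_0^{(i)})\|_*+\beta_i\|S_0^{(i)}\|_1\big).
\]
Writing $H^{(i)}=\hat L^{(i)}-L_0^{(i)}$ and $G^{(i)}=\mathcal{P}_i(\hat S)-S_0^{(i)}$ (the constraint forces $\hat S^{(i)}-S_0^{(i)}=G^{(i)}$), the target $\sum_i\|G^{(i)}\|_{2,1}$ is to be read off from this inequality.

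The second step is the heart of the argument. For any subgradient $\Lambda_i$ of the nuclear norm at $\mathcal{C}(L_0^{(i)})$ and $\Gamma_i$ of the $\ell_1$ norm at $S_0^{(i)}$, convexity gives
\[
\|\mathcal{C}(\hat L^{(i)})\|_*+\beta_i\|\hat S^{(i)}\|_1\ge\|\mathcal{C}(L_0^{(i)})\|_*+\beta_i\|S_0^{(i)}\|_1+\langle\Lambda_i,\mathcal{C}(H^{(i)})\rangle+\beta_i\langle\Gamma_i,G^{(i)}\rangle.
\]
I would take $\Lambda_i=U_iV_i^T+W_i$ and $\Gamma_i=\operatorname{sgn}(S_0^{(i)})+F_i$, where $U_iV_i^T$ is the sign matrix of $\mathcal{C}(T^{(i)})$, $W_i$ lies in $\mathcal{T}_i^\perp$ (the orthocomplement of the matrix tangent space of $\mathcal{C}(T^{(i)})$) with $\|W_i\|\le 1-\gamma$, and $F_i$ is supported off $\Omega_i:=\operatorname{supp}(S_0^{(i)})$ with $\|F_i\|_\infty\le\beta_i(1-\gamma)$, chosen --- via the golfing scheme or the least-squares construction of Cand\`es--Li--Ma--Wright --- so that $U_iV_i^T+W_i$ equals $\beta_i(\operatorname{sgn}(S_0^{(i)})+F_i)$ up to a residual of small operator and Frobenius norm. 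This is exactly where the hypotheses enter: the bound $\bar\mu=\sup_q\mu(T_q)$ caps the coherence of the column space of $\mathcal{C}(T^{(i)})$ (it is the centering $\mathcal{C}$ that kills the otherwise maximally coherent $\mathbf{1}$-direction introduced by $X_i\mathbf{1}^T$), the uniformly random draw of the $X_i$ controls the row space and the incoherence of $U_iV_i^T$, and the uniformly random support of $S^{(i)}$ of size $m_i\le 0.4\rho_s p\underline k$ makes the certificate construction succeed; the conditions $d<\rho_r\min\{\underline k,p\}\bar\mu^{-1}\log^{-2}\max\{\bar k,p\}$ and the bound on $m_i$ are precisely the regime in which each per-patch certificate exists with failure probability at most $c\max\{\underline k,p\}^{-10}$. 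A union bound over the $n$ patches, plus a Bernstein estimate (the $e^{-c_2\underline k}$ term) that every $\eta$-ball captures $\Theta(\underline k)$ samples, produces the stated probability $1-c_1n\max\{\underline k,p\}^{-10}-e^{-c_2\underline k}$.

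Substituting these certificate choices into the two displays and cancelling the oracle penalties leaves an inequality whose positive side dominates a constant multiple of $\sum_i\big(\beta_i\|P_{\Omega_i^\perp}G^{(i)}\|_1+\|P_{\mathcal{T}_i^\perp}\mathcal{C}(H^{(i)})\|_*\big)$, while the certificate residuals contribute only linear terms in $H^{(i)}$ and $G^{(i)}$. Using the identity $H^{(i)}+G^{(i)}=(E^{(i)}+R^{(i)})-\big(\tilde X^{(i)}-\hat L^{(i)}-\hat S^{(i)}\big)$, the contractivity of $\mathcal{C}$, and Young's inequality $ab\le a^2+b^2/4$, these linear terms are absorbed into $\sum_i\lambda_i\|\tilde X^{(i)}-\hat L^{(i)}-\hat S^{(i)}\|_F^2$ (which sits on the left of the basic inequality) at the price of an additional $C\sum_i\lambda_i\epsilon_i^2$; the pieces of $G^{(i)}$ and $\mathcal{C}(H^{(i)})$ on the complementary supports are then recovered from the same right-hand side using the near-duality of the certificates. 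Bounding $\|G^{(i)}\|_{2,1}\le\|P_{\Omega_i^\perp}G^{(i)}\|_1+\sqrt{m_i}\,\|P_{\Omega_i}G^{(i)}\|_F$ with $\sqrt{m_i}\le\sqrt{0.4\rho_s p\bar k}$, summing over $i$, substituting $\lambda_i\epsilon_i^2=\epsilon_i\sqrt{\min\{k_i+1,p\}}$ and $\beta_i^{-1}\le\sqrt{\max\{\bar k+1,p\}}$, and applying $\sum_i\epsilon_i\le\sqrt n\,\|\epsilon\|_2$ by Cauchy--Schwarz, collapses the bound to $C\sqrt{pn}\,\bar k\,\|\epsilon\|_2$.

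I expect the main obstacle to be the certificate step together with its probabilistic accounting: the patches overlap, so the local matrices $\tilde X^{(i)}$ are dependent, and the construction must be carried out conditionally on the neighbourhood ($\eta$-ball) structure while the remaining randomness still drives the Cand\`es--Li--Ma--Wright argument on each patch; moreover the per-patch failure probability has to be polynomially small in $\max\{\underline k,p\}$ --- which is what forces the $\log^{-2}\max\{\bar k,p\}$ slack in the condition on $d$ --- so that the union bound over all $n$ patches remains meaningful. A secondary but genuine nuisance is the rank-one term $X_i\mathbf{1}^T$, which is perfectly coherent: every coherence claim must be made after applying $\mathcal{C}$, and one must separately verify that the least-squares pull-back \eqref{eq:L} does not reinstate it.
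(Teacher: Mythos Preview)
Your proposal is correct and follows essentially the same route as the paper: per-patch dual certificates (the paper invokes the RPCA-with-prior-subspace framework of Zhan--Vaswani, treating $\mathbf{1}^T$ as known row-space information and building the certificate relative to $\tilde V_i=\mathrm{span}\{\mathbf{1},V_i\}$, which is the formal counterpart of your remark about centering), row-space incoherence supplied automatically by matrix Bernstein on the i.i.d.\ tangent-projected samples, a union bound over the $n$ patches together with a binomial/Bernstein count on the $k_i$, and Cauchy--Schwarz to collapse $\sum_i\epsilon_i$ to $\sqrt n\,\|\epsilon\|_2$. The paper's endgame reaches the $\|\cdot\|_{2,1}$ bound by first controlling $\sum_i\|G^{(i)}\|_F$ via the contraction $\|P_{\Omega_i}P_{\Pi_i}\|\le 1/4$ and then using $\|G^{(i)}\|_{2,1}\le\sqrt{k_i+1}\,\|G^{(i)}\|_F$, a slightly different (but equivalent) route to the same extra factor of $\bar k$ as your $\sqrt{m_i}$ split.
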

\textbf{Remark.} We can interpret $\epsilon$ as the total noise in the data. As explained in \S\ref{sec:geo}, $\| \tilde{X}^{(i)}-X_i1^T- T^{(i)}-S^{(i)} \|_F= \|R^{(i)}+E^{(i)}\|_F$, thus $\epsilon =0$ if the manifold is linear and the Gaussian noise is absent. The factor $\sqrt n$ in front of $\|\epsilon\|_2$ takes into account the use of different norms on the two hand sides (the right hand side is the Frobenius norm of the noise matrix $\{R^{(i)}+E^{(i)}\}_{i=1}^n$ obtained by stacking the $R^{(i)}+E^{(i)}$ associated with each patch into one big matrix). The factor $\sqrt p$ is due to the small weight $\beta_i$ of $\|S^{(i)}\|_1$ compared to the weight 1 on $\| \tilde{X}^{(i)}-L^{(i)}-S^{(i)} \|_F^2$. The factor $\bar k$ appears because on average, each column of $\hat{S}-S$ is added about $k:=\frac{1}{n}\sum_i k_i$ times on the left hand side.

\section{Estimating the curvature}\label{sec:curve}

The definition $\lambda_i$ in~\eqref{eq:lambdamu} involves an unknown quantity $\epsilon_i^2 = \| \tilde{X}^{(i)}-X_i1^T- T^{(i)}-S^{(i)} \|_F^2\equiv \|R^{(i)}+E^{(i)}\|_F^2$. We assume the standard deviation $\sigma$  of the i.i.d. Gaussian entries of $E^{(i)}$ is known, so $\|E^{(i)}\|^2_F$ can be approximated. 
Since $R^{(i)}$ is independent of $E^{(i)}$, the cross term $\langle R^{(i)},E^{(i)}\rangle$ is small. Our main task is estimating $\|R^{(i)}\|_F^2$, the linear approximation error defined in \S\ref{sec:geo}. At local regions, second order terms dominates the linear approximation residue, hence estimating $\|R^{(i)}\|_F^2$ requires the curvature information. 

\subsection{A short review of related concepts in Riemannian geometry}
The principal curvatures at a point on a high dimensional manifold  are defined as the singular values of the second fundamental forms~\citep{Diff_geo}. As estimating all the singular values from the noisy data may not be stable, we are only interested in estimating the mean curvature, that is the root mean squares of the principal curvatures. 

\begin{wrapfigure}{r}{0.49\textwidth}
   \vspace{-10pt}
   \begin{center}
    \includegraphics[width=0.48\textwidth]{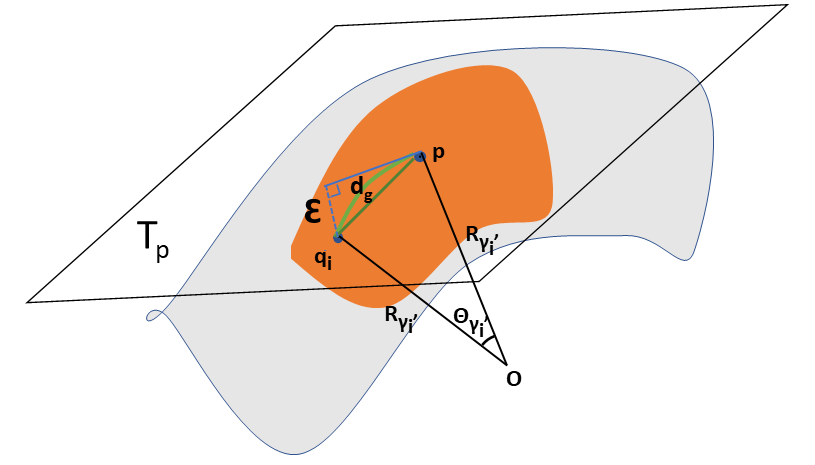}
  \end{center}
  \caption{Local manifold geometry}
  \label{figure2a}
\end{wrapfigure}

For the simplicity of illustration, we review the related concepts using the 2D surface $\mathcal{M}$ embedded in $\mathbb{R}^3$ (Figure~\ref{figure2a}). For any curve $\gamma(s)$ in $\mathcal{M}$ parametrized by arclength with unit tangent vector $t_{\gamma}(s)$,  its curvature is the norm of the covariant derivative of $t_{\gamma}$: $\|d t_{\gamma}(s)/ds\|=\|\gamma''(s)\|$. In particular, we have the following decomposition $$\gamma''(s) = k_g(s) \hat{v}(s)+k_n(s)\hat{n}(s),$$ 
where $\hat{n}(s)$ is the unit normal direction of the manifold at $\gamma(s)$ and $\hat{v}$ is the direction perpendicular to $\hat{n}(s)$ and $t_\gamma(s)$, i.e., $\hat{v} = \hat{n}\times t_\gamma(s)$. The coefficient $k_n(s)$ along the normal direction is called the  normal curvature, and the coefficient $k_g(s)$ along the perpendicular direction $\hat{v}$  is called the geodesic curvature. The principal curvatures purely depend on $k_n$. In particular, in 2D,  the principal curvatures are precisely the maximum and minimum of $k_n$ among all possible directions. 

A natural way to compute the normal curvature is through geodesic curves. The geodesic curve between two points is the shortest curve connecting them. Therefore geodesic curves are usually viewed as “straight lines” on the manifold. The geodesic curves have the favorable property that their curvatures have 0 contribution from $k_g$.  That is to say, the second order derivative of the geodesic curve parameterized by the arclength is exactly $k_n$. 
\subsection{The proposed method}
All existing curvature estimation methods we are aware of are in the field of computer vision where the objects are 2D surfaces in 3D \citep{Flynn89,Eppel06,Tong05,Meek20}. Most of these methods are difficult to generalize to high ($>3$) dimensions with the exception of the integral invariant based approaches~\citep{Pottmann07}. However, the integral invariant based approaches is not robust to sparse noise and is unsuited to our problem. 

We propose a new method to estimate the mean curvature from the noisy data. Although the graphic illustration is made in 3D, the method is dimension independent. To compute the average normal curvature at a point $p\in \mathcal{M}$, we randomly pick $m$ points $q_i \in \mathcal{M} $ on the manifold lying within a proper distance to $p$ as specified in Algorithm \ref{alg:2}. Let $\gamma_i$ be the geodesic curve between $p$ and $q_i$. For each $i$, we compute the pairwise Euclidean distance $\|p-q_i\|_2$ and the pairwise  geodesic distance $d_g(p,q_i)$ using the Dijkstra's algorithm. Through a circular approximation of the geodesic curve as drawn in Figure \ref{figure2a}, we can compute the curvature of the geodesic curve as the inverse of the radius 
\begin{equation}\label{eq:recip}
\|\gamma''_i (p)\| = 1/R_{\gamma'_i},  
\end{equation}
where $\gamma'_i$ is the tangent direction along which the curvature is calculated and $R_{\gamma'_i}$ is the radius of the circular approximation to the curve $\gamma$ at $p$, which can be solved along with the angle $\theta_{\gamma'_i}$ through the geometric relations  
\begin{equation}\label{eq:sin}
2R_{\gamma'_i}\sin(\theta_{\gamma'_i}/2) = \|p-q_i\|_2, \quad R_{\gamma'_i} \theta_{\gamma'_i} = d_g(p,q_i),
\end{equation}
as indicated in Figure \ref{figure2a}. Finally, we define the average curvature $\bar \Gamma(p)$ at $p$ to be
\begin{equation}\label{eq:gamma}\bar \Gamma(p):= (\mathbb{E}_{q_i} \|\gamma_i''(p)\|^2)^{1/2}\equiv (\mathbb{E}_{q_i} R_{\gamma_i}^{-2})^{1/2}.
\end{equation}

To estimate the mean curvature from the data, we construct two matrices $D$ and $A$. $D\in \mathbb{R}^{n\times n}$ is the pairwise distance matrix, where $D_{ij}$ denotes the Euclidean distance between two points $X_i$ and $X_j$. $A$ is a type of adjacency matrix defined as follows and is to be used to compute the pairwise geodesic distances from the data,
\begin{equation}
A_{ij}=\left\{
\begin{aligned}
D_{ij}\quad  &\text{if $X_j$ is in the $k$ nearest neighbors of $X_i$} \\
0\quad\quad  &\text{elsewhere.}
\end{aligned}
\right.
\end{equation}
Algorithm \ref{alg:2} estimates the mean curvature at some point $p$ and Algorithm \ref{alg:3} estimates the overall curvature within some region $\Omega$ on the manifold.

\begin{figure}[t]
\begin{minipage}{\columnwidth}
\begin{algorithm}[H]
\SetAlgoLined
\KwIn{Distance matrix $D$, adjacency matrix $A$, some proper constants $r_1<r_2$, number of pairs $m$}
\KwOut{the estimated mean curvature $\bar\Gamma(p)$}
  \For{i = 1: m}{
  Randomly pick some point $q_i\in B(p,r_2)\backslash B(p,r_1)$;  
  \\
 Calculate the geodesic distance $d_g(p,q_i)$ using $A$;\\
  Solve for the radius $R_i$ based on \eqref{eq:sin}; \\
  }
  Compute estimated curvature $\bar\Gamma(p)=(\frac{1}{m}\sum_{i=1}^m R_i^{-2})^{1/2}$.
    \caption{Estimate the mean curvature $\bar\Gamma(p)$ at some point $p$}
 \label{alg:2}
\end{algorithm}
\end{minipage}
\begin{minipage}{\columnwidth}
\begin{algorithm}[H]
\SetAlgoLined
\KwIn{Distance matrix $D$, adjacency matrix $A$, some proper constants $r_1<r_2$, number of pairs $m$}
\KwOut{the estimated overall curvature $\bar\Gamma(\Omega)$}
  \For{i = 1: m}{
  Randomly pick a pair of points $p_i, q_i\in\Omega$ such that $r_1\leq d(p_i,q_i)\leq r_2$ ;   \\
 Calculate the geodesic distance $d_g(p_i,q_i)$ using $A$;\\
  Solve for the radius $R_i$ based on \eqref{eq:sin}; \\
  }
  Compute estimated curvature $\bar\Gamma(\Omega)=(\frac{1}{m}\sum_{i=1}^m R_i^{-2})^{1/2}$.
    \caption{Estimate the overall curvature $\bar\Gamma(\Omega)$ for some region $\Omega$}
 \label{alg:3}
\end{algorithm}
\end{minipage}
\end{figure}

The geodesic distance is computed using the Dijkstra's algorithm, which is not accurate when $p$ and $q$ are too close to each other. The constant $r_1$ in Algorithm \ref{alg:2} and \ref{alg:3} is thus used to make sure that $p$ and $q$ are sufficiently apart. The constant $r_2$ is to make sure that $q$ is not too far away from $p$, as after all we are computing the mean curvature around $p$. 
\subsection{Estimating $\lambda_i$ from the mean curvature}
We provide a way to approximate $\lambda_i$ when the number of points $n$ is finite. In the asymptotic limit ($k\rightarrow \infty$, $k/n \rightarrow 0$), all the approximate sign ``$\approx$'' below become ``$=$''. 

Fix a point $p \in \mathcal{M}$ and another point $q_i$ in the $\eta$-neighborhood of $p$. Let $\gamma_i$ be the geodesic curve between them. With the computed curvature $\bar\Gamma(p)$, we can estimate the linear approximation error of expanding $q_i$ at $p$: $q_i\approx p + P_{T_p}(q_i-p)$, where $P_{T_p}$ is the projection onto the tangent space at $p$. Let $\mathcal{E}$ be the error of this linear approximation $\mathcal{E}(q_i,p)=q_i-p-P_{T_p}(q_i-p) =P_{T^{\perp}_p}(q_i-p)$ where $T^{\perp}_p$ is the orthogonal complement of the tangent space. From Figure~\ref{figure2a}, the relation between $\|\mathcal{E}(p,q_i)\|_2$, $\|p-q_i\|_2$, and $\theta_{\gamma'_i}$ is
\begin{equation}\label{eq:error}
\textstyle\|\mathcal{E}(p,q_i)\|_2 \approx \|p-q_i\|_2 \sin \frac{\theta_{\gamma'_i}}{2} =\frac{ \|p-q_i\|_2^2}{2R_{\gamma'_i}}.
\end{equation}
To obtain a closed-form formula for $\mathcal{E}$, we assume that for the fixed $p$ and a randomly chosen $q_i$ in an $\xi$ neighborhood of $p$, the projection $P_{T_p}(q_i-p)$ follows a uniform distribution in a ball with radius $\eta'$ (in fact $\eta'\approx \eta$  since when $\eta$ is small, the projection of $q-p$ is almost $q-p$ itself, therefore the radius of the projected ball almost equal to the radius of the original neighborhood). Under this assumption, let $r_i=\|P_{T_p}(q_i-p)\|_2$ be the magnitude of the projection and $\phi_i=P_{T_p}(q_i-p)/ \|P_{T_p}(q_i-p)\|_2$ be the direction, by~\citep{vershynin2018high}, $r_i$ and $\phi_i$ are independent of each  other. As the curvature $R_{\gamma_i}$ only depends on the direction, the numerator and the denominator of the right hand side of \eqref{eq:error} are independent of each other. Therefore,
\begin{equation}\label{eq:err2}
\textstyle\mathbb{E}\|\mathcal{E}(p,q_i)\|_2^2 \approx \mathbb{E}\frac{\|p-q_i\|_2^4}{4R_{\gamma'_i}^2}=\frac{\mathbb{E} \|p-q_i\|_2^4}{4}\mathbb{E} R_{\gamma'_i}^{-2}= \frac{\mathbb{E} \|p-q_i\|_2^4}{4}\cdot \bar\Gamma^2(p),
\end{equation}
where the first equality used the independence and the last equality used the definition of the mean curvature in the previous subsection.

Now we apply this estimation to the neighborhood of $X_i$. Let $p=X_i$, and $q_j=X_{i_j}$ be the neighbors of $X_i$. Using \eqref{eq:err2}, the average linear approximation error on this patch is 
\begin{equation}\label{eq:final_err}
{1\over k}\textstyle\|R^{(i)}\|_F^2:={1\over k}\sum\limits_{j=1}^k \|\mathcal{E}(X_{i_j},X_i)\|_2^2\xrightarrow{k\rightarrow \infty}  \frac{\mathbb{E}\| X_i-X_{i_j}\|_2^4}{4}\bar \Gamma^2(X_i) ,
\end{equation}
where the right hand side can also be estimated with
\begin{equation}\label{eq:err3}
     {1\over k}\sum\limits_{j=1}^k\frac{\|X_{i}-X_{i_j}\|_2^4}{4}\bar \Gamma^2(X_i)\xrightarrow{k\rightarrow \infty}\frac{\mathbb{E}\| X_i-X_{i_j}\|_2^4}{4}\bar \Gamma^2(X_i)
\end{equation}
so when $k$ is sufficient large, ${1\over k}\textstyle\|R^{(i)}\|_F^2$ is also close to ${1\over k}\sum\limits_{j=1}^k\frac{\|X_{i}-X_{i_j}\|_2^4}{4}\bar \Gamma^2(X_i)$, which can be completely computed from the data. Combining this with the argument at the beginning of \S5 we get,
\begin{equation*}\label{eq:ep}\epsilon_i=\|R^{(i)}+E^{(i)}\|_F\approx \sqrt{\|R^{(i)}\|_F^2+\|E^{(i)}\|_F^2)}\approx\Big( {(k+1)p\sigma^2+\sum\limits_{j=1}^k\frac{\|X_{i}-X_{i_j}\|_2^4}{4}\bar \Gamma^2(X_i)}\Big)^{1/2} =: \hat{\epsilon}.\end{equation*}
Thus we can set $\hat\lambda_i= \frac{\min\{k+1,p\}^{1/2}}{\hat \epsilon_i}$ due to \eqref{eq:lambdamu}. We show in the appendix that $\left|\frac{\hat \lambda_i-\lambda_i^*}{\lambda_i^*}\right|\xrightarrow{k\rightarrow \infty}0$, where $\lambda_i^*=\frac{\min\{k+1,p\}^{1/2}}{\epsilon_i}$ as in \eqref{eq:lambdamu}. 

\section{Optimization algorithm}\label{sec:opt}
To solve the convex optimization problem~\eqref{eq:form1} in a memory-economic way, we first write $L^{(i)}$ as a function of $S$ and eliminate them from the problem. 
We can do so by fixing $S$ and minimizing the objective function with respect to $L^{(i)}$

\begin{equation}\label{eq:deriva1}
\begin{aligned}
\hat L^{(i)}&=\underset{L^{(i)}}{\argmin}\; \lambda_i\| \tilde X^{(i)}-L^{(i)}-S^{(i)}\|_F^2+\| \mathcal{C}(L^{(i)})\|_*\\
&=\underset{L^{(i)}}{\argmin}\;\lambda_i\| \mathcal{C}(L^{(i)})-\mathcal{C}(\tilde X^{(i)}-S^{(i)})\|_F^2+\| \mathcal{C}(L^{(i)})\|_*+\lambda_i\|(I- \mathcal{C})(L^{(i)}-(\tilde X^{(i)}-S^{(i)}))\|_F^2.
\end{aligned}
\end{equation}
Notice that $L^{(i)}$ can be decomposed as $L^{(i)}=\mathcal{C}(L^{(i)})+(I-\mathcal{C})(L^{(i)})$, set $A=\mathcal{C}(L^{(i)}), B=(I-\mathcal{C})(L^{(i)})$, then \eqref{eq:deriva1} is equivalent to
\begin{equation*}\label{eq:decouple}
    (\hat A,\hat B)=\argmin_{A,B} \;\lambda_i\| A-\mathcal{C}(\tilde X^{(i)}-S^{(i)})\|_F^2+\| A\|_*+\lambda_i\|B-(I-\mathcal{C})(\tilde X^{*(i)}-S^{(i)}))\|_F^2,
\end{equation*}
which decouples into 
\begin{equation*}\label{eq:solve_A}
    \hat A = \argmin_A \;\lambda_i\| A-\mathcal{C}(\tilde X^{(i)}-S^{(i)})\|_F^2+\| A\|_*,\; \hat B =\argmin_B \lambda_i\|B-(I-\mathcal{C})(\tilde X^{(i)}-S^{(i)})\|_F^2.
\end{equation*}
    
The problems above have closed form solutions
\begin{equation}\label{eq:solution_A}
    \hat A = \mathcal{T}_{1/2\lambda_i}(\mathcal{C}(\tilde{X}^{(i)}-\mathcal{P}_i(S))),\;\hat B=(I-\mathcal{C})(\tilde X^{(i)}-\mathcal{P}_i(S))
\end{equation}
where $\mathcal{T}_{\mu}$  is the soft-thresholding operator on the singular values
\[
\mathcal{T}_{\mu}(Z) = U\max \{\Sigma -\mu I, 0\} V^*,  \textrm{ where } U\Sigma V^* \textrm{ is the SVD of $Z$.}
\]
Combing $\hat A$ and $\hat B$, we have derived the closed form solution for $\hat L^{(i)}$
\begin{align}\label{eq:hatL}
\hat{L}^{(i)}(S) = \mathcal{T}_{1/2\lambda_i}(\mathcal{C}(\tilde{X}^{(i)}-\mathcal{P}_i(S))) + (I-\mathcal{C})(\tilde{X}^{(i)}-\mathcal{P}_i(S)).
\end{align}


Plugging~\eqref{eq:hatL} into $F$ in~\eqref{eq:form1},  the resulting optimization problem solely depends on $S$. Then we apply FISTA \cite{beck2009fast,sha2019efficient} to find the optimal solution $\hat S$ with
\begin{equation}\label{eq:S}
\hat{S}= \argmin\limits_{S} F(\hat{L}^{(i)}(S),S).
\end{equation}
Once $\hat{S}$ is found, if the data has no Gaussian noise, then the final estimation for $X$ is $\hat X\equiv \tilde X-\hat S$; if there is Gaussian noise, we use the following denoised local patches $\hat L^{(i)}_{\tau^*}$
\begin{equation}\label{eq:truncated}
    \hat L^{(i)}_{\tau^*} = H_{\tau^*}(\mathcal{C}(\tilde{X}^{(i)}-\mathcal{P}_i(\hat S))) + (I-\mathcal{C})(\tilde{X}^{(i)}-\mathcal{P}_i(\hat S)),
\end{equation}
where $H_{\tau^*}$ is the Singular Value Hard Thresholding Operator with the optimal threshold as defined in \cite{6846297}. This optimal thresholding removes the Gaussian noise from $\hat{L}^{(i)}_{\tau^*}$. With the denoised $\hat L^{(i)}_{\tau^*}$, we solve \eqref{eq:L} to obtain the denoised data
\begin{equation}\label{eq:X}
 \hat{X} = (\sum_{i=1}^n \lambda_i \hat{L}^{(i)}_{\tau^*} P_i^T )(\sum_{i=1}^n \lambda_i P_i P_i^T)^{-1}.
 \end{equation}
The proposed Nonlinear Robust Principle Component Analysis (NRPCA) algorithm is summarized in Algorithm~\ref{alg:1}.
\begin{algorithm}[!ht]
\SetAlgoLined
\KwIn{Noisy data matrix $\tilde{X}$, $k$ (number of neighbors in each local patch), $T$ (number of neighborhood updates iterations)}
\KwOut{the denoised data $\hat{X}$, the estimated sparse noise $\hat{S}$}
  Estimate the curvature using \eqref{eq:gamma}; \\
  Estimate $\lambda_i$, $i=1, \ldots,n$ as in \S \ref{sec:curve}, set $\beta$ as in \eqref{eq:form1};  \\
  $\hat{S}\gets 0$; \\ 
  \For{iter = 1: T}{
  Find the $k$NN for each point using $\tilde{X}-\hat S$  and construct the restriction operators $\{\mathcal{P}_i\}_{i=1}^n$;   \\
  Construct the local data matrices $ \tilde{X}^{(i)}= \mathcal{P}_i(\tilde{X})$ using $\mathcal{P}_i$ and the noisy data $\tilde{X}$; \\
  $\hat{S}\gets $ minimizer of \eqref{eq:S} iteratively using FISTA; \\
  }
  Compute each $\hat L^{(i)}_{\tau^*}$ from \eqref{eq:truncated} and assign $\hat{X}$ from \eqref{eq:X}.
    \caption{Nonlinear Robust PCA}
 \label{alg:1}
\end{algorithm}
 There is one caveat in solving \eqref{eq:form1}: the strong sparse noise may result in a wrong neighborhood assignment when constructing the local patches.
Therefore, once $\hat{S}$ is obtained and removed from the data, we update the neighborhood assignment and re-compute $\hat{S}$. This procedure is repeated $T$ times. 

\section{Numerical experiment}\label{sec:numerical}

\textbf{Simulated Swiss roll:} We demonstrate the superior performance of NRPCA on a synthetic dataset following the mixed noise model~\eqref{eq:sample_model}. We sampled 2000 noiseless data $X_i$ uniformly from a 3D Swiss roll and generated the Gaussian noise matrix with i.i.d. entries obeying $\mathcal{N}(0, 0.25)$. The sparse noise matrix $S$ is generated by randomly replacing $100$ entries of a zero $p \times n$ matrix  with i.i.d. samples generated from $(-1)^y \cdot z$ where $y \sim \text{Bernoulli}(0.5)$ and $z \sim \mathcal{N}(5, 0.09)$. We applied NRPCA to the simulated data with patch size $k = 15$. Figure~\ref{sim} reports the denoising results in the original space (3D) looking down from above. We compare two ways of using the outputs of NRPCA:   1). only remove the sparse noise from the data $\tilde{X}-\hat{S}$; 2). remove both the sparse and Gaussian noise from the data: $\hat{X}$. In addition, we plotted $\tilde{X}-\hat{S}$ with and without the neighbourhood update. These results are all superior to an ad-hoc application of the Robust PCA on the individual local patches.
\begin{figure}[ht]
\begin{center}
\includegraphics[width=1\textwidth]{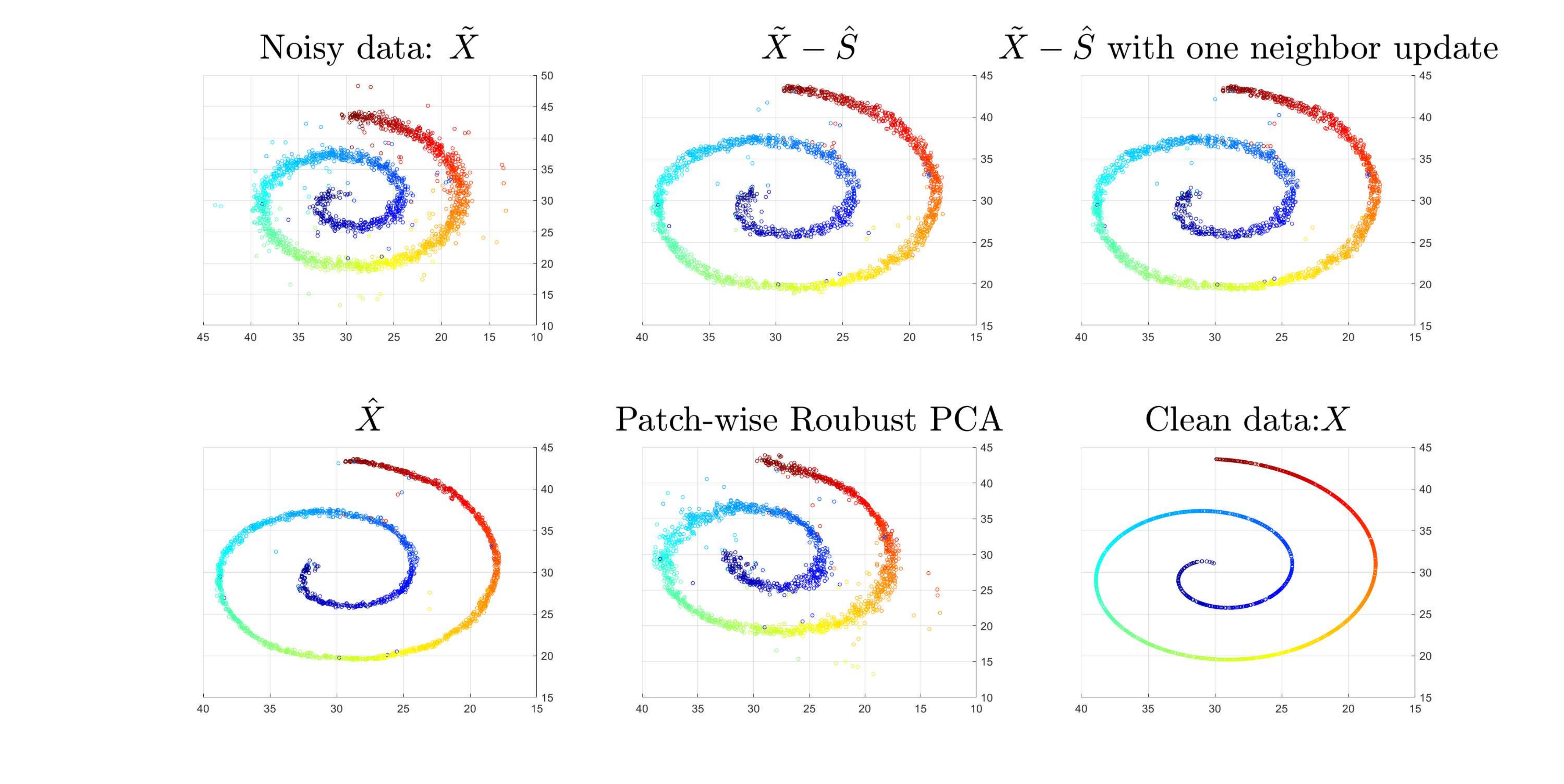}
\caption{NRPCA applied to the noisy 3D Swiss roll dataset. $\tilde{X}-\hat{S}$ is the result after subtracting the sparse noise estimated by setting $T=1$ in NRPCA, i.e., no neighbour update; ``$\tilde{X}-\hat{S}$ with one neighbor update'' used the $\hat{S}$ obtained by setting $T=2$ in NRPCA; clearly, the neighbour update helped to remove more sparse noise; $\hat{X}$ is the data obtained via fitting the denoised tangent spaces as in \eqref{eq:L}. Compared to``$\tilde{X}-\hat{S}$ with one neighbor update'', it further removed the Gaussian noise from the data;  ''Patch-wise Robust PCA'' refers to the ad-hoc application of the vanilla Robust PCA to each local patch independently, whose performance is worse than the proposed joint-recovery formulation.}
\label{sim}
\end{center}
\end{figure}


\textbf{High dimensional Swiss roll:} We carried out the same  simulation on a high dimension Swiss roll, and obtained better distinguishability among 1)-3). We also observed an overall improvement of the performance of NRPCA, which matches our intuition that the assumptions of Theorem \ref{thm4.2} are more likely to be satisfied in high dimensions. The denoised results are displayed in Figure \ref{swiss}, where we clearly see that 
the use of $\hat{X}$ instead of 
$\tilde{X}-\hat{S}$ allows  a significant amount of Gaussian noise to be removed from the data.

In the high dimensional simulation, we generated a Swiss roll in $\mathbb{R}^{20}$ as following: 

1. Choose the number of samples $n=2000$;

2. let $t$ be the vector of length $n$ containing the $n$ uniform grid points in the interval $[0,4\pi]$ with grid space $4\pi /(n-1)$;

3. Set the first three dimensions of the data the same way as the 3D Swiss roll, for $i=1,...,n$,
\begin{align*}
X_i(1) &= (t(i) + 1) \cos(t(i)); \\
 X_i(2) &= (t(i) + 1) \sin(t(i)); \\
 X_i(3) & \sim \text{unif} ([0,8\pi]), 
 \end{align*}
where $\text{unif} ([0,8\pi])$ means the uniform distribution on the interval $[0,8\pi]$.

4. Set the 4-20 dimensions of the data to contain pure sinusoids with various frequencies  
$$X_i(k)=t(i)\sin(f_k  t(i)),\quad  k=4,...,20,.$$
where $f_k=k/21$ is the frequency for the $k$th dimension.
The noisy data is obtained by adding i.i.d. Gaussian noise $\mathcal{N}(0,0.25)$ to each entry of $X$ and adding sparse noise to 600 randomly chosen entries where the noise added to each chosen entry obeys $\mathcal{N}(5,0.09)$.
\\

\begin{figure}[ht]
\centering
\includegraphics[width=1\textwidth]{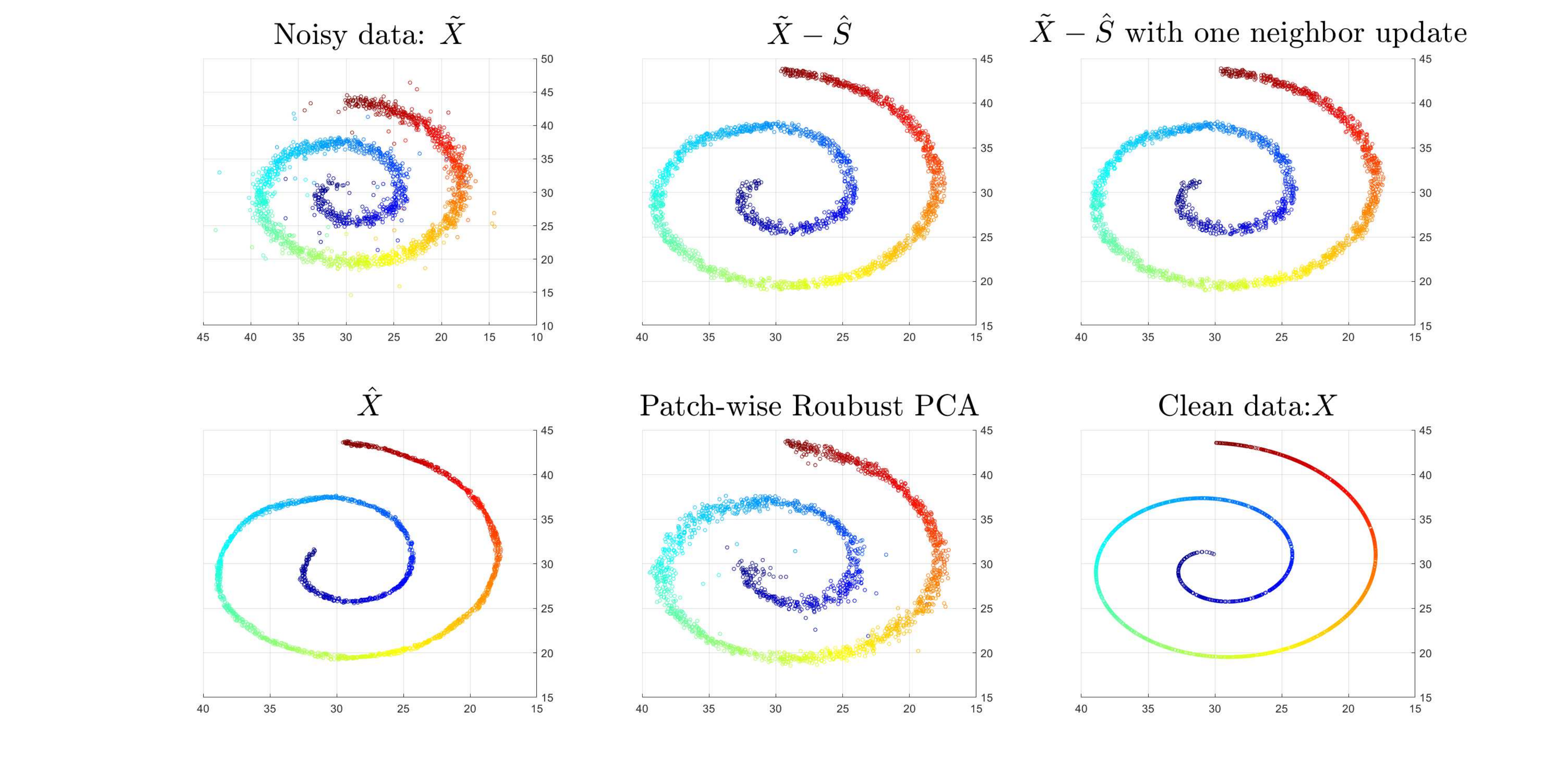}
\caption{NRPCA applied to the noisy 20D Swiss roll data set. $\tilde{X}-\hat{S}$ is the result after subtracting the estimated sparse noise via NRPCA with $T=1$; “$\tilde{X}-\hat{S}$ with one neighbor update” is that with $T=2$, i.e., patches are reassigned once; $\hat{X}$ is the denoised data obtained via fitting the tangent spaces in NRPCA with $T=2$; “Patch-wise Robust PCA” refers to the ad-hoc application of the vanilla RPCA to each local patch independently, whose performance is clearly worse than  the proposed joint-recovery formulation. }
\label{swiss}
\end{figure}

 \textbf{MNIST:} We observe some interesting dimension reduction results of the MNIST dataset with the help of NRPCA. It is well-known that the handwritten digits 4 and 9 have so high a similarity that some popular dimension reduction methods, such as Isomap and Laplacian Eigenmaps (LE) are not able to separate them into two clusters (first column of Figure~\ref{mnist1}). Despite the similarity, a few other methods (such as t-SNE) are able to distinguish them to a much higher degree, which suggests the possibility of improving the results of Isomap and LE with proper data pre-processing. We conjecture that the overlapping parts in Figure \ref{mnist1} (the left column) are caused by personalized writing styles with different beginning or finishing strokes. This type of  differences can be better modelled by sparse noise than Gaussian or Poisson noises. 
 \begin{figure}[ht]
\centering
\includegraphics[width=1\textwidth]{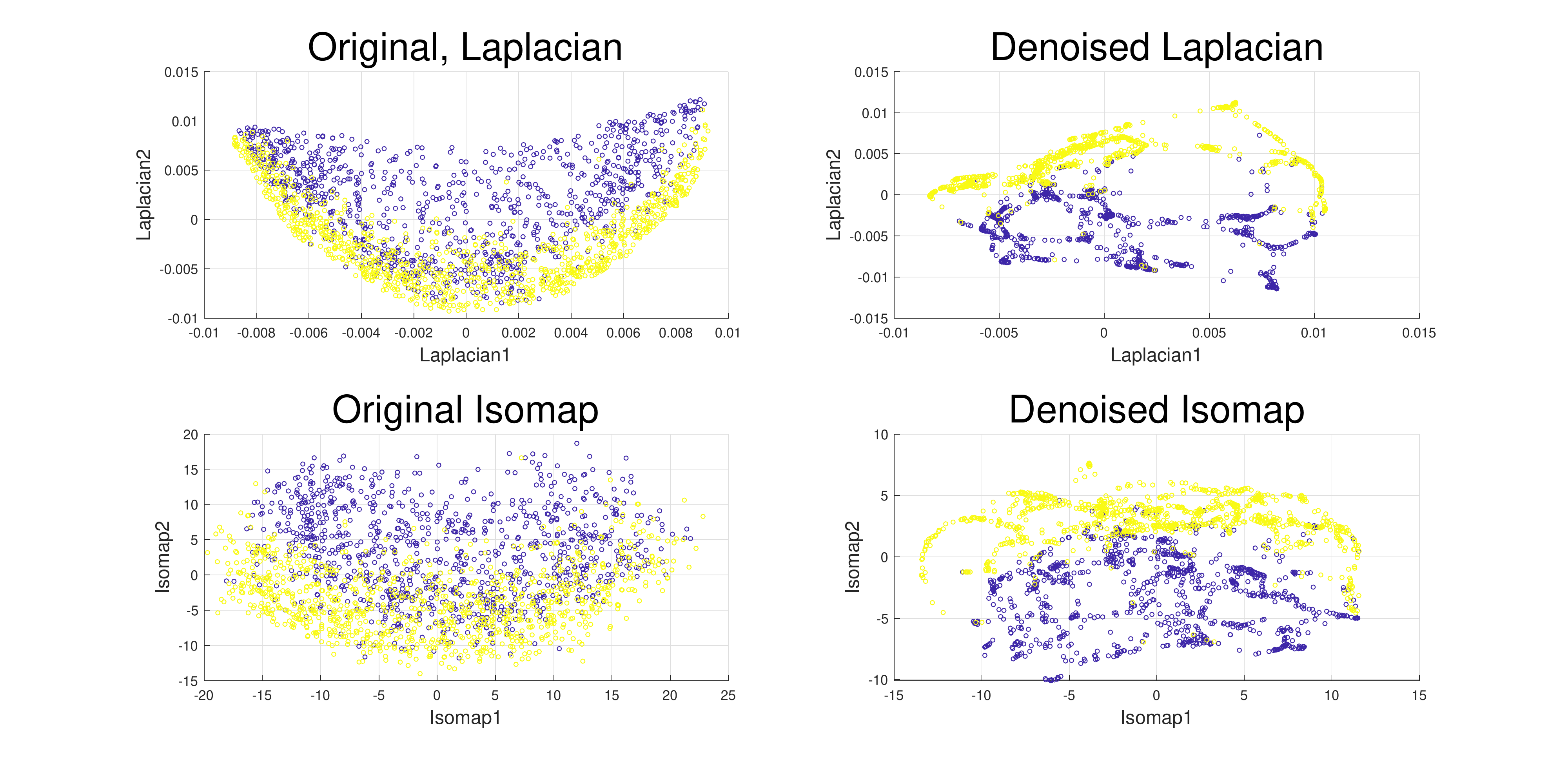}
\caption{Laplacian eigenmaps and Isomap results for the original and the NRPCA denoised digits 4 and 9 from the MNIST dataset.}
\label{mnist1}
\end{figure}
 The right columns of Figure~\ref{mnist1} confirm this conjecture: after the NRPCA denoising (with $k=10$), we see a much better separability of the two digits using the first two coordinates of Isomap and Laplacian Eigenmaps. Here we used 2000 randomly drawn images of 4 and 9 from the MNIST training dataset. Figure \ref{mnist} used another random set of the same cardinally and $k=5$, but they both demonstrated that the denoising step greatly facilitates the dimensionality reduction. 
 
 In addition, we observe some emerging trajectory (or skeleton) patterns  in the plot of the denoised embedding (right column of Figure \ref{mnist1} and Figure \ref{mnist}). Mathematically speaking, this is due to the nuclear norm penalty on the tangent spaces in the optimization formulation that forces the denoised data to have a small intrinsic dimension. However, since the small intrinsic dimensionality is not manually inputted but implicitly imposed via an automatic calculation of the data curvature and the weight parameter $\lambda_i$, we do not think the trajectory pattern is a human artifact. To further examine the meaning the trajectories, we replaced the dots in the bottom two scattered plots in Figure \ref{mnist} by their original images of the digits, and obtained Figure \ref{mnist2} and Figure \ref{mnist3}. We can see that 1). the digits are better grouped in the denoised embedding than the orignal one and 2). the trajectories in the denoised embedding correspond to graduate transitions between the two images on the two ends. If two images are connected by two trajectories, then it indicates two ways for one image to gradually deform into the other. Furthermore, Figure \ref{mnist4} listed a few images of 4 and 9 before and after denoising, which shows which part of the image is detected as sparse noise and changed by NRPCA.

\begin{figure}[!ht]
\centering
\includegraphics[width=1\textwidth]{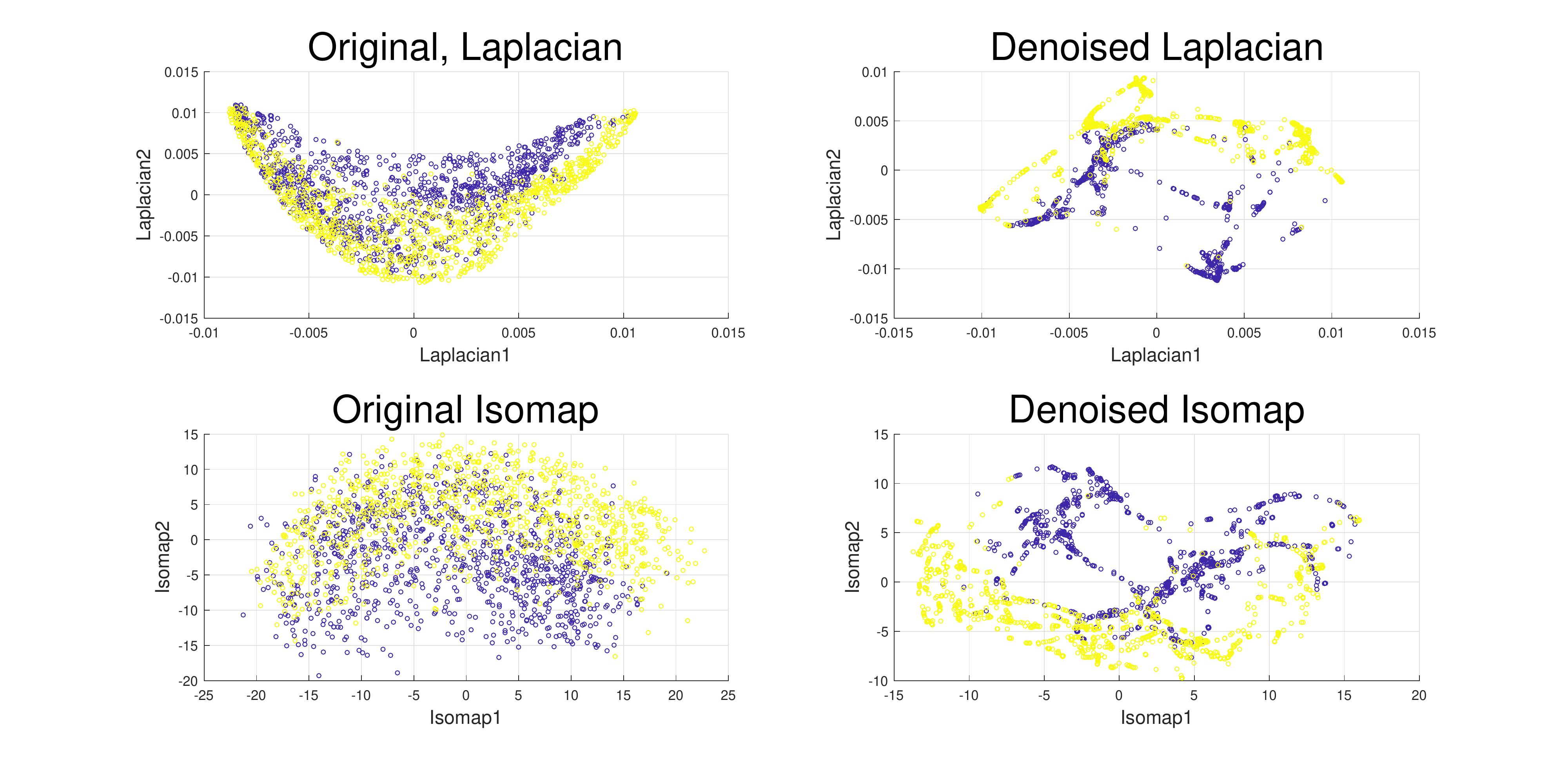}
\caption{Laplacian eigenmaps and Isomap results for the original and the NRPCA denoised digits 4 and 9 from the MNIST dataset.}
\label{mnist}
\end{figure}

\begin{figure}[!ht]
\centering
\includegraphics[width=1\textwidth]{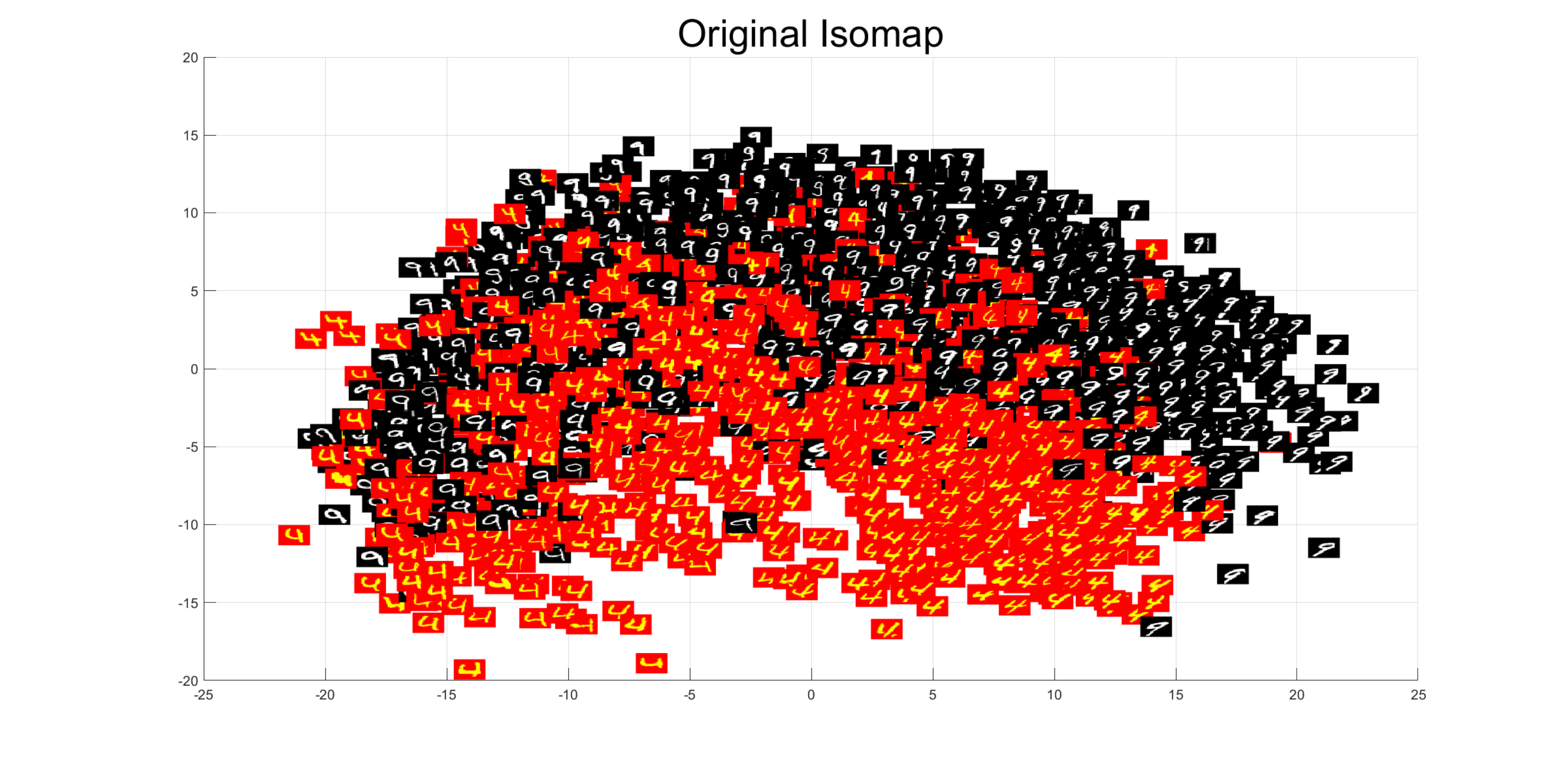}
\caption{Isomap embedding using the original data from the MNIST dataset.}
\label{mnist2}
\end{figure}

\begin{figure}[!ht]
\centering
\includegraphics[width=1\textwidth]{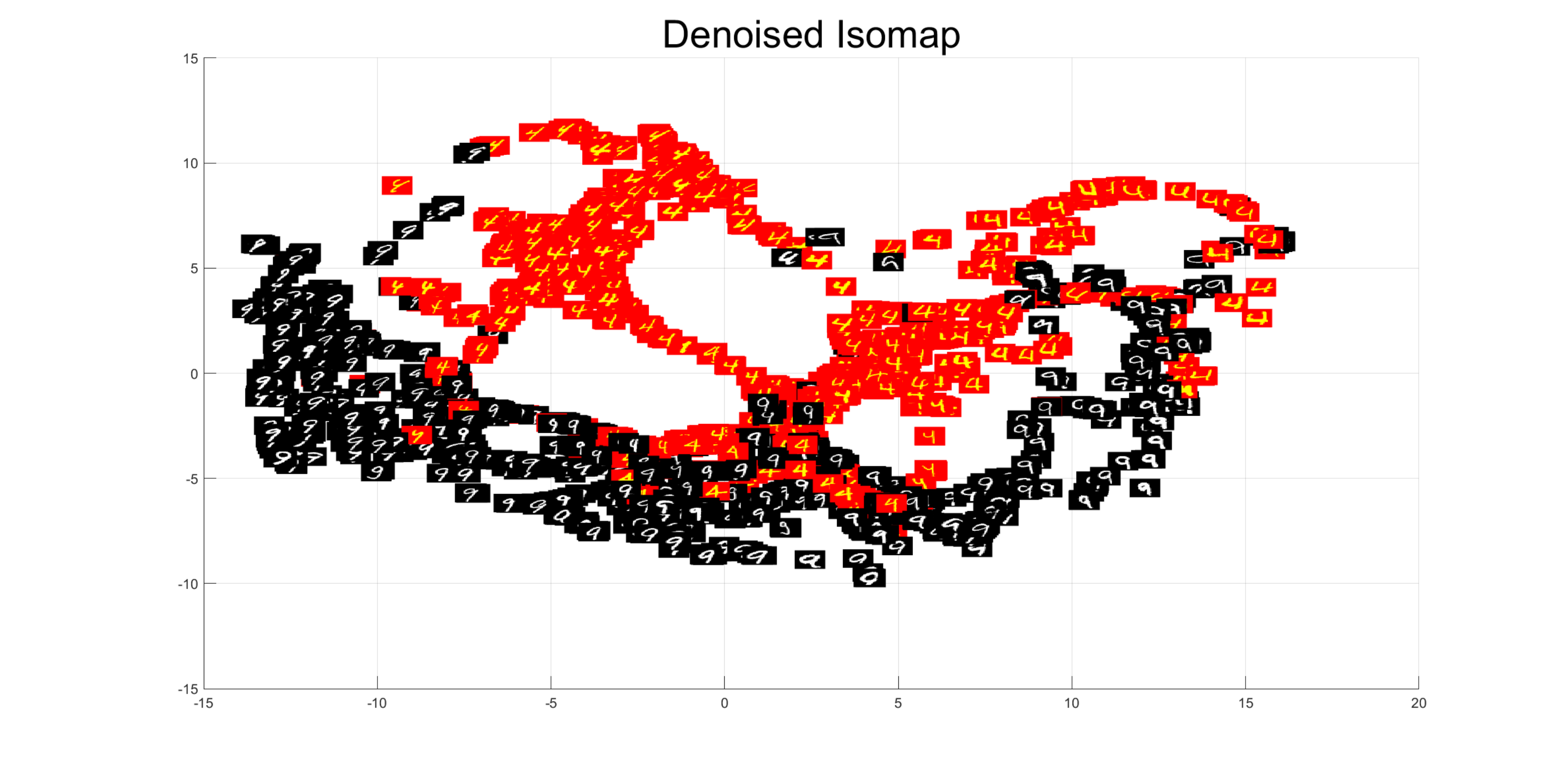}
\caption{Isomap embedding using the Denoised data via NRPCA.}
\label{mnist3}
\end{figure}

\begin{figure}[!ht]
\centering
\includegraphics[width=1\textwidth]{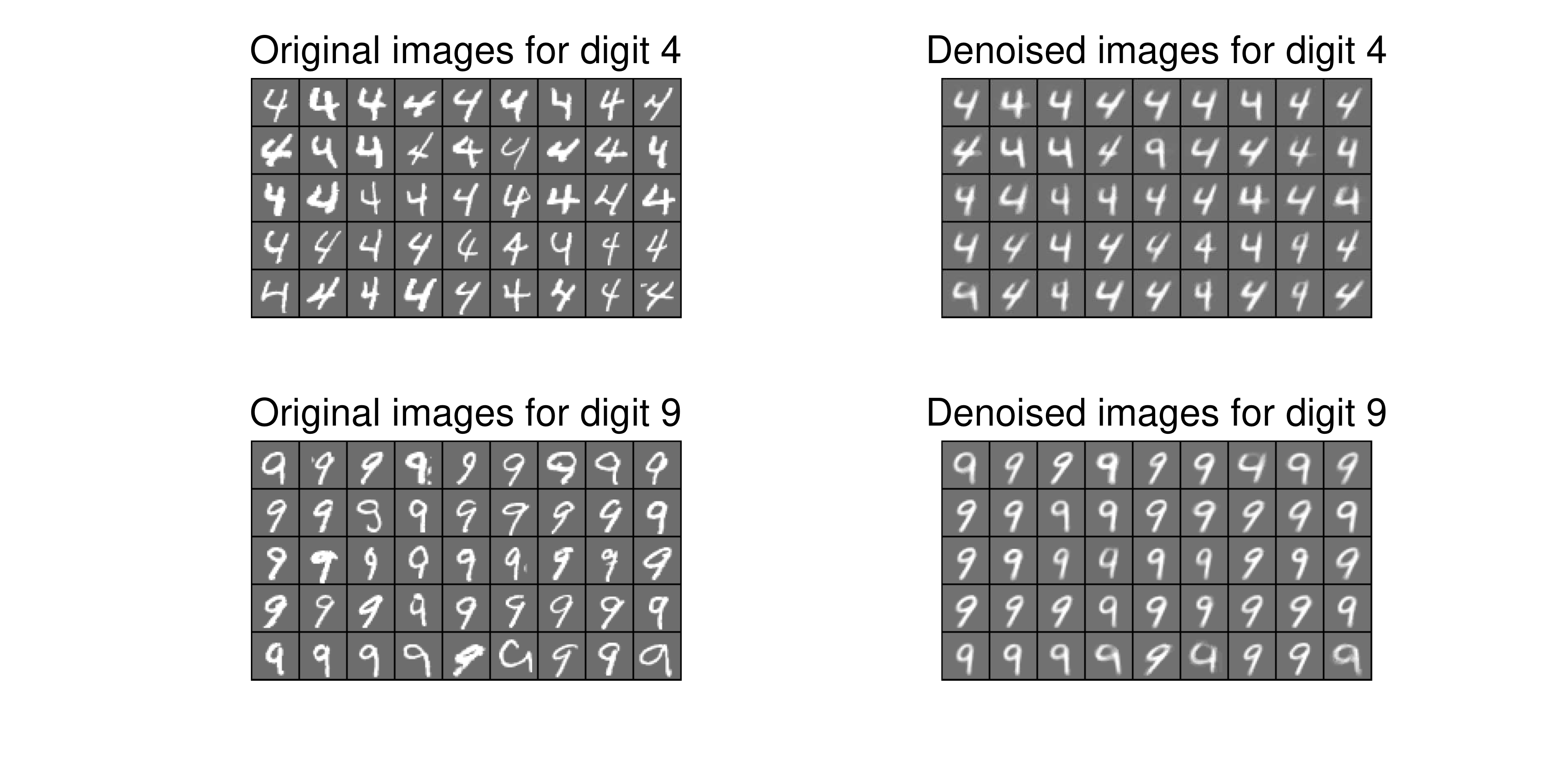}
\caption{A comparison of the original and the NRPCA denoised images of digit 4 and 9.}
\label{mnist4}
\end{figure}

Figure \ref{mnist5} shows the results for NRPCA denoising with more iterations of patch-reassignment, we can see that the results almost have no visible difference after $T>2$. Since the patch-reassignment is in the outer iteration, increasing its frequency greatly increases the computation time. Fortunately, we find that often times two iterations are enough to deliver a good denoising result.
\begin{figure}[!ht]
\centering
\includegraphics[width=1\textwidth]{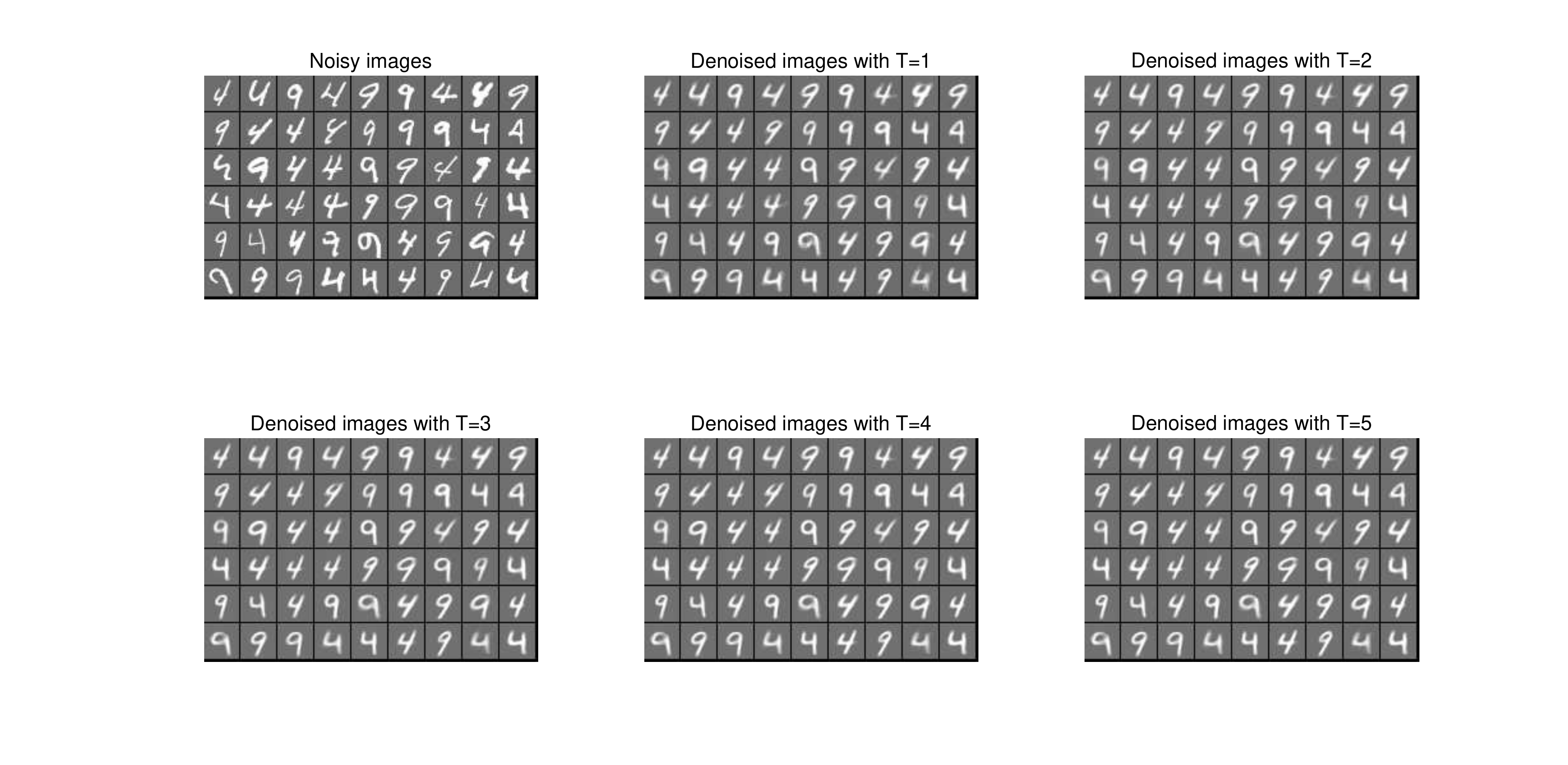}
\caption{NRPCA Denoising results with more iterations of patch-reassignment.}
\label{mnist5}
\end{figure}

\textbf{Biological data:} We illustrate the potential usefulness of NRPCA algorithm on an embryoid body (EB) differentiation dataset over a 27-day time course, which consists of gene expressions for 31,000 cells measured with single-cell RNA-sequencing technology (scRNAseq) \citep{martin_1975, moon_2019}. This EB data comprising expression measurement for cells originated from embryoid at different stages is hence developmental in nature, which should exhibit a progressive type of characters such as tree structure because all cells arise from a single oocyte and then develop into different highly-differentiated tissues. This progression character is often missing when we directly apply dimension reduction methods to the data as shown in Figure \ref{EBT_lle} because biological data including scRNAseq is highly noisy and often is contaminated with outliers from different sources including environmental effects and measurement error. In this case, we aim to reveal the progressive nature of the single-cell data from transcript abundance as measured by scRNAseq. 

We first normalized the scRNAseq data following the procedure described in \cite{moon_2019} and randomly selected 1000 cells using the stratified sampling framework to maintain the ratios among different developmental stages. We applied our NRPCA method to the normalized subset of EB data and then applied Locally Linear Embeddin (LLE) to the denoised results. The two-dimensional LLE results are shown in Figure~\ref{EBT_lle}. Our analysis demonstrated that although LLE is unable to show the progression structure using noisy data, after the NRPCA denoising, LLE successfully extracted the trajectory structure in the data, which reflects the underlying smooth differentiating processes of embryonic cells. Interestingly, using the denoised data from $\tilde X-\hat{S}$ with neighbor update, the LLE embedding showed a branching at around day 9 and increased variance in later time points, which was confirmed by manual analysis using 80 biomarkers in~\cite{moon_2019}.

\begin{figure}[!ht]
\centering
\includegraphics[width=1\textwidth]{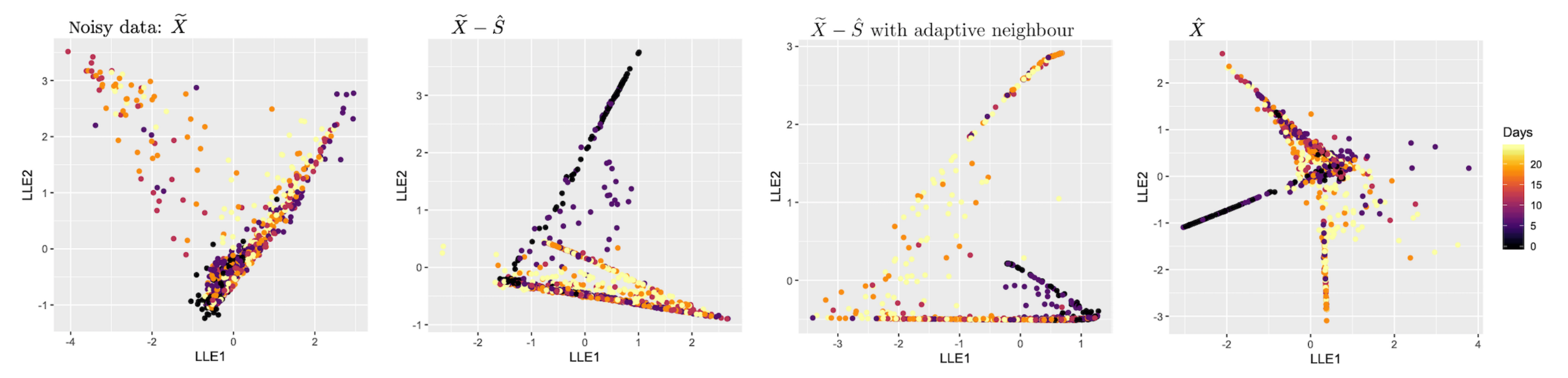}
\caption{LLE results for denoised scRNAseq data set. }
\label{EBT_lle}
\end{figure}

\section{Conclusion}
In this paper, we proposed the first outlier correction method for nonlinear data analysis that can correct outliers caused by the addition of large sparse noise. The method is a generalization of the Robust PCA method to the nonlinear setting. We provided procedures to treat the non-linearity by working with overlapping local patches of the data manifold and incorporating the curvature information into the denoising algorithm.   We established a theoretical error bound on the denoised data that holds under conditions only depending on the intrinsic properties of the manifold. We tested our method on both synthetic and real dataset that were known to have  nonlinear structures and reported promising results. 

\textbf{Acknowledgements}
The authors would like to thank Shuai Yuan, Hongbo Lu, Changxiong Liu, Jonathan Fleck, Yichen Lou, and Lijun Cheng for useful discussions. This work was supported in part by the NIH grants U01DE029255, 5RO3DE027399  and the NSF grants DMS-1902906, DMS-1621798, DMS-1715178, CCF-1909523 and NCS-1630982. 

\bibliographystyle{plain}

\bibliography{NRPCA_arxiv.bib}

\section{Appendix}
This section contains the proof of Theorem \ref{thm4.2} and the proof of $|\frac{\hat\lambda_i-\lambda_i^*}{\lambda_i^*}|\xrightarrow{k\rightarrow \infty}0$ in \S \ref{sec:curve}.
\subsection{Proof of Theorem \ref{thm4.2}}

\begin{definition} Let $\mathcal{M}$  be a compact manifold endowed with a continuous measure $\mu$. For any $z\in \mathcal{M}$, its $(\eta(\tau),\tau)$-neighborhood $\mathcal{N}$ is the neighbourhood with radius $\eta(\tau)$ and measure $\tau$, i.e., $\mu(\mathcal{N})=\tau$, $\mathcal{N}=\mathcal{M} \cap B_2(z,\eta(\tau))$, and
 $\eta(\tau)=\min\{r: \mu(\mathcal{M} \cap B_2(z,r))= \tau\}$.

\end{definition}

Since $\mathcal{M}$ is compact, its measure is finite, say $\mu(\mathcal{M})=A$, and the radii of all the $\tau$-neighbourhoods are bounded by some constant $\eta$:
\begin{equation*}
    \sup\limits_{i} |\eta_i|^2 \leq \eta^2.
\end{equation*}

\begin{theorem}[Full version of Theorem \ref{thm4.2}]\label{full}\ Given the dataset $X=[X_1,X_2,\cdots,X_n]$, let each $X_i$ be independently drawn from a compact manifold $\mathcal{M}\subseteq \mathbb{R}^p$ with intrinsic dimension $d$ and endowed with the uniform distribution $\mu$. Fix some $q>0$, let $X_{i_j}$, $j=1,\ldots,k_i$ be the $k_i$ points falling in the $(\eta_i,q)$-neighbourhood of $X_i$. Together they form a matrix $X^{(i)} = [X_{i_1},\ldots,X_{i_{k_i}},X_i]$.  Suppose the i.i.d. projections $y_{i,j} \equiv P_{T_{X_i}}(X_{i_j}-X_i)$ where $T_{X_i}$ is the tangent space at $X_i$ obey the same distribution as some $a_i$ for all $j$, i.e., $y_{i,j} \sim a_i$ ($\sim$ means the two vectors are identically distributed), and the matrix $\mathbb{E}(a_i-\mathbb{E}a_i)(a_i-\mathbb{E}a_i)^*$ has a finite condition number for each $i$.  In addition, suppose the support of the noise matrix $S^{(i)}$ is uniformly distributed among all sets of cardinality $m_i$. For any $\zeta\in \mathcal{M}$, let $T_{\zeta}$ be the tangent space of $\mathcal{M}$ at $\zeta$ and define $\mu_1:=\sup_{\zeta\in \mathcal{M}} \mu(T_\zeta)$.  
Then as long as 
$qn\geq {c\log n}$,
$d< \rho_r\min\{nq/2,p\} \mu_1^{-1}\log^{-2} \max\{2nq,p\}  $, and ${m_i\over {pk_i}}\leq 0.4\rho_s$ (here $c,\;\rho_r$ and $\rho_s$ are positive numerical constants), then
with probability over $1-c_1(n\max\{nq/2,p\}^{-10}+\exp(-c_2nq))$ for some constants $c_1$ and $c_2$, the minimizer $\hat{S}$ to (2) with $\lambda_i =\frac{\min\{k_i+1,p\}^{1/2}}{\epsilon_i}$, and $\beta_i =\max\{k_i
+1,p\}^{-1/2} $ has the error bound
\[
\sum_i \|\mathcal{P}_i(\hat{S})-S^{(i)} \|_{2,1} \leq C\sqrt{pn}\bar k\|\epsilon\|_2.
\]
Here $\bar k=\max_i k_i$ satisfing $nq/2\leq \bar k\leq 2nq$, $\;\epsilon_i = \| \tilde{X}^{(i)}-X_i1^T- T^{(i)}-S^{(i)} \|_F$, $\epsilon=[\epsilon_1,...,\epsilon_n]$, $\|\cdot\|_{2,1}$ stands for taking $\ell_2$ norm along columns and $\ell_1$ norm along the rows, and $T^{(i)}$ is the projection of $X^{(i)}-X_i$ to the tangent space $T_{X_i}$.
\end{theorem}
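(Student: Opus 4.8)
The plan is to adapt the dual-certificate (``golfing scheme'') analysis of Robust PCA~\cite{candes_2011} to three features of \eqref{eq:form1} that are absent there: the quadratic, noise-tolerant fidelity term; the centering operator $\mathcal{C}$ inside the nuclear norm; and the coupling of the $n$ per-patch problems through the single shared matrix $S$. First, compare the minimizer with an oracle: let $(\hat S,\{\hat L^{(i)}\})$ minimize \eqref{eq:form1}, and take the feasible oracle point $(S,\{L_0^{(i)}\})$ with $L_0^{(i)}=X_i1^T+T^{(i)}$, so that the oracle misfit on patch $i$ is exactly $N^{(i)}:=R^{(i)}+E^{(i)}$, of Frobenius norm $\epsilon_i$. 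With $H:=\hat S-S$ and $G^{(i)}:=\hat L^{(i)}-L_0^{(i)}$, expanding the squares in $F(\hat S,\{\hat L^{(i)}\})\le F(S,\{L_0^{(i)}\})$ and cancelling $\sum_i\lambda_i\|N^{(i)}\|_F^2$ gives
\[
\sum_i\lambda_i\Big(\|G^{(i)}+\mathcal{P}_i(H)\|_F^2-2\langle N^{(i)},G^{(i)}+\mathcal{P}_i(H)\rangle\Big)+\sum_i\Delta_*^{(i)}+\beta\sum_i\Delta_1^{(i)}\le 0,
\]
with $\Delta_*^{(i)}=\|\mathcal{C}(T^{(i)})+\mathcal{C}(G^{(i)})\|_*-\|\mathcal{C}(T^{(i)})\|_*$ and $\Delta_1^{(i)}=\|S^{(i)}+\mathcal{P}_i(H)\|_1-\|S^{(i)}\|_1$. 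Standard subgradient inequalities lower-bound $\Delta_*^{(i)}\ge\langle U_iV_i^*,\mathcal{C}(G^{(i)})\rangle+\|P_{\Theta_i^\perp}\mathcal{C}(G^{(i)})\|_*$, where $U_i\Sigma_iV_i^*$ is the SVD of $\mathcal{C}(T^{(i)})$ (rank $\le d$) and $\Theta_i$ is the tangent space at $\mathcal{C}(T^{(i)})$ to the rank-$\le d$ variety, and $\Delta_1^{(i)}\ge\langle\mathrm{sgn}(S^{(i)}),\mathcal{P}_i(H)\rangle+\|P_{\Omega_i^c}\mathcal{P}_i(H)\|_1$, with $\Omega_i=\mathrm{supp}(S^{(i)})$, $|\Omega_i|=m_i$, and $P_{\Omega_i^c}$ the coordinate restriction to $\Omega_i^c$.

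Second, verify the two incoherence conditions per patch, with high probability and uniformly in $i$. The column space of $\mathcal{C}(T^{(i)})$ equals the tangent space $T_{X_i}$ once $k_i>d$, so its coherence is $\le\mu_1$ by hypothesis. The columns of $\mathcal{C}(T^{(i)})$ are centered i.i.d.\ copies of $y_{i,j}\sim a_i$; since $\mathbb{E}(a_i-\mathbb{E}a_i)(a_i-\mathbb{E}a_i)^*$ is well conditioned and $k_i=\Theta(nq)$ with $nq\ge c\log n$ (a Chernoff bound puts every $k_i$ in $[nq/2,2nq]$ and makes the sample covariances concentrate, with failure probability $\le e^{-c_2nq}$), matrix Bernstein gives row-space coherence $O(1)$ for $\mathcal{C}(T^{(i)})$. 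Thus the pair has joint coherence $O(\mu_1)$; with rank $\le d$, ambient sizes $p$ and $k_i+1\asymp nq$, and $m_i\le 0.4\rho_s\,p\,k_i$, the hypotheses $d\lesssim\min\{nq/2,p\}\mu_1^{-1}\log^{-2}\max\{2nq,p\}$ and the sparsity bound are exactly the regime where the golfing scheme of~\cite{candes_2011} yields, with probability $\ge1-O(\max\{nq/2,p\}^{-10})$, a certificate $Q^{(i)}$ with $\|P_{\Theta_i}Q^{(i)}-U_iV_i^*\|_F$ small, $\|P_{\Theta_i^\perp}Q^{(i)}\|\le\tfrac12$, $\|P_{\Omega_i}Q^{(i)}-\beta\,\mathrm{sgn}(S^{(i)})\|_F$ small, and $\|P_{\Omega_i^c}Q^{(i)}\|_\infty\le\tfrac\beta2$; a union bound over the $n$ patches gives the $n\max\{nq/2,p\}^{-10}$ term.

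Third, feed the certificates into the inequality of the first step: replacing $\langle U_iV_i^*,\mathcal{C}(G^{(i)})\rangle$ and $\beta\langle\mathrm{sgn}(S^{(i)}),\mathcal{P}_i(H)\rangle$ by $\langle Q^{(i)},\cdot\rangle$ up to the small Frobenius residuals, the $Q^{(i)}$-aligned parts recombine against the misfit term, the slack $\|P_{\Theta_i^\perp}Q^{(i)}\|\le\tfrac12$ and $\|P_{\Omega_i^c}Q^{(i)}\|_\infty\le\tfrac\beta2$ absorb a constant fraction of $\|P_{\Theta_i^\perp}\mathcal{C}(G^{(i)})\|_*$ and $\|P_{\Omega_i^c}\mathcal{P}_i(H)\|_1$, and the on-tangent/on-support residuals $P_{\Theta_i}\mathcal{C}(G^{(i)})$, $P_{\Omega_i}\mathcal{P}_i(H)$ are controlled by the low-rank/small-support incoherence estimates. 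What survives is a bound $c\,\beta\sum_i\|\mathcal{P}_i(H)\|_1\le\sum_i\lambda_i\langle N^{(i)},G^{(i)}+\mathcal{P}_i(H)\rangle+(\text{lower order})$; handling the noise cross-terms by Cauchy--Schwarz and completing the square against $\sum_i\lambda_i\|G^{(i)}+\mathcal{P}_i(H)\|_F^2$ leaves $\sum_i\lambda_i\epsilon_i^2=\sum_i\min\{k_i+1,p\}^{1/2}\epsilon_i$. Tracking the weights $\lambda_i,\beta$, the norm conversions $\|\cdot\|_1\ge\|\cdot\|_{2,1}\ge\|\cdot\|_F$, a Cauchy--Schwarz over $i$, and the fact that every column of $H$ is restricted into $\Theta(\bar k)$ of the overlapping patches converts this into $\sum_i\|\mathcal{P}_i(\hat S)-S^{(i)}\|_{2,1}\le C\sqrt{pn}\,\bar k\,\|\epsilon\|_2$.

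I expect the main obstacle to be the certificate construction with $\mathcal{C}$ present: subgradients of $\|\mathcal{C}(\cdot)\|_*$ at $L_0^{(i)}$ lie in the range of $\mathcal{C}$ (zero column mean), whereas $\beta\,\mathrm{sgn}(S^{(i)})$ does not, so the golfing iteration must be run with the composed operators $P_{\Omega_i}$ and $\mathcal{C}$ and the mismatch carried through every estimate; at the same time the row-space incoherence of the \emph{random} matrix $\mathcal{C}(T^{(i)})$ must hold for all $n$ patches simultaneously, which is precisely what forces $qn\ge c\log n$. A secondary difficulty is the bookkeeping in the third step: ensuring the many slack terms summed over all patches remain lower order than $\beta\sum_i\|\mathcal{P}_i(H)\|_{2,1}$ while keeping the final constant $C$ independent of $n$, $p$, and $\bar k$.
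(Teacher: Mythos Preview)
Your architecture is correct and matches the paper's: compare the minimizer to the oracle $(S,\{X_i\mathbf{1}^T+T^{(i)}\})$, invoke a per-patch dual certificate under incoherence, verify row-space incoherence via concentration of the sample covariance of the i.i.d.\ tangent projections (this is exactly the paper's \S9.1.2, including the Chernoff bound forcing $k_i\in[nq/2,2nq]$), and finish by completing the square to get $\sum_i\sqrt{\min\{k_i+1,p\}}\,\epsilon_i$ followed by Cauchy--Schwarz over $i$ and the $\|\cdot\|_F\to\|\cdot\|_{2,1}$ conversion costing a factor $\sqrt{\bar k}$.

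The one place you diverge is the handling of $\mathcal{C}$, which you flag as the main obstacle. You propose running golfing with the composed operators $P_{\Omega_i}$ and $\mathcal{C}$ and tracking the zero-mean mismatch throughout. The paper avoids this entirely by a one-line observation: since $\mathcal{C}(T^{(i)})\mathbf{1}=0$, the right singular vectors satisfy $V_i^*\mathbf{1}=0$, so $\mathbf{1}\perp\mathrm{span}(V_i)$. The paper then \emph{enlarges} the row space to $\tilde V_i$, an orthonormal basis of $\mathrm{span}\{\mathbf{1},V_i\}$, and takes the tangent cone $\Pi_i=\{U_iX^*+Y\tilde V_i^*\}$ with this bigger space. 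With that choice $(I-\tilde V_i\tilde V_i^*)\mathbf{1}=0$, hence $\langle H_1G,U_iV_i^*\rangle=\langle H_1,U_iV_i^*\rangle$ and $P_{\Pi_i^\perp}(H_1G)=P_{\Pi_i^\perp}H_1$, so the centering operator simply drops out of the subgradient inequality. The certificate can then be quoted verbatim from Zhan--Vaswani's ``RPCA with partial subspace knowledge'' (the known subspace being $\mathrm{span}\{\mathbf{1}\}$), rather than rebuilt from~\cite{candes_2011}; this also explains why the row-space coherence hypothesis is on $\tilde V_i$ (dimension $d+1$) rather than on your $\Theta_i$. Your route would presumably work but is substantially heavier; the paper's trick makes the ``main obstacle'' disappear in two lines.
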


The proof the Theorem \ref{full} uses similar techniques as \citep{zhan2015robust}. The main difference is that in \citep{zhan2015robust}, both the left and the right singular vectors of the data matrix are required to satisfy the coherence conditions, while here we show that only the left singular vectors that corresponding to the tangent spaces are relevant. In other words, the recovery guarantee is built solely upon assumptions on the intrinsic properties of the manifold, i.e., the tangent spaces. 

\textbf{The proof architecture is as follows}. In Section \ref{9.1.1}, we derive the error bound in Theorem \ref{thm4.2} under a small coherence condition for both the left and the right singular vectors of $L^{(i)}$. In Section \ref{9.1.2}, we show that the requirement on the right singular vectors can be removed using the i.i.d. assumption on the samples.
\subsubsection{Deriving the error bound in Theorem \ref{full}\label{9.1.1} under coherence conditions on both the right and the left singular vectors}
In Section 3 of the main paper, we explained that $L^{(i)}=X_i1^T+T^{(i)}$ corresponds to the linear approximation of the $i$th patch. 
After the centering $\mathcal{C}(L^{(i)})=\mathcal{C}(T^{(i)})$, one gets rid of the first term and the resulting matrix has a column span coincide with $T^{(i)}$. This indicates that the columns of $\mathcal{C}(L^{(i)})$ lie in the column space of the tangent space $span(T^{(i)})$, this also indicates that the rows of $L^{(i)}$ are in $span\{1^T,T^{(i)}\}$. 

One can view the knowledge that $1^T$ is in the row space of $L^{(i)}$ as a prior knowledge of the left singular vectors of $L^{(i)}$. Robust PCA with prior knowledge is studied in \citep{zhan2015robust}, and we will use some of the result therein.
Specifically, we adapt the dual certificate approach in \citep{zhan2015robust} to our problem to derive the error bound for our new problem in the theorem, and choose proper $\lambda_i,\;i=1,2,\cdots, n$ and $\beta_i$ accordingly.

We first state the following assumptions as from  \citep{zhan2015robust}:

\textbf{Assumption A}

In each local patch, $L^{(i)}\in\mathbb{R}^{p,k_i+1}$, denote $n_{(1)} = \max\{p,k_i+1\},\;n_{(2)}=\min\{p,k_i+1\}$, let 
$$\mathcal{C}(L^{(i)})=U_i\Sigma_i V_i^*,$$
 be the singular value decomposition for each $L^{(i)}$, where $U_i\in\mathbb{R}^{p\times d},\Sigma_i\in\mathbb{R}^{d\times d},V_i^*\in\mathbb{R}^{d\times (k+1)}$. let $\tilde V_i$ be the orthonormal basis of $span\{ 1, V_i\}$, assume for each  $i\in\{1,2,\cdots,n\},$ the following hold with a constant $\rho_r$ that is small enough
\begin{equation}\label{eq:Assump1}
\max_j\|  U_i^*e_j\|^2\leq {\rho_r d\over p},
\end{equation}
\begin{equation}\label{eq:Assump2}
\max_j\| \tilde V_i^*e_j\|^2\leq {\rho_r d\over k_i},
\end{equation}
\begin{equation}\label{eq:Assump3}
\max_j\| U_i V_i^*\|_\infty\leq \sqrt{\rho_r d\over pk_i}.
\end{equation}
and $\rho_r,\rho_s,p,k_i$ satisfies the following assumptions:

\textbf{Assumption B(\citep{zhan2015robust}, Assumption III.2.)}

(a) $\rho_r\leq \min\{10^{-4},C_1\},$

(b) $\rho_s\leq\min\{1-1.5 b_1(\rho_r),0.0156\},$ 

(c) $n_{(1)}\geq \max\{C_2(\rho_r),1024\},$

(d) $n_{(2)}\geq 100\log^2 n_{(1)},$

(e) ${(p+k_i)^{1/6}\over \log(p+k_i)}>{10.5\over (\rho_s)^{1/6}(1-5.6561)\sqrt{\rho_s}},$

(f) ${pk_i\over 500\log n_{(1)}}>{1\over\rho_s^2},$

where $b_1(\rho_r), C_2(\rho_r)$ are some constants related to $\rho_r$.

Denote $\Pi_i$ as the linear space of matrices for each local patch (note that this is different from the tangent space $T^i$ of the manifold)
\[
\Pi_i:=\{U_i X^*+Y \tilde V_i^*, X\in\mathbb{R}^{p,d}, Y\in\mathbb{R}^{p,d+1}\}.
\]

As shown by \citep{zhan2015robust}, the following lemma holds, indicating that if incoherence condition is satisfied, then with high probability, there exists desirable dual certificate $(W,F)$.

\begin{lemma} 
[\citep{zhan2015robust}, Lemma V.8, Lemma V.9] For fixed $i=1,2,\cdots, n$, if assumptions \eqref{eq:Assump1}, \eqref{eq:Assump2}, \eqref{eq:Assump3}, Assumption B and other assumptions in Theorem \ref{full} hold, then with probability at least $1-c n_{(1)}^{-10}$, $\|\mathcal{P}_{\Omega_i}\mathcal{P}_{\Pi_i}\|\leq 1/4$, where $\Omega_i$ is the support set of $S^{(i)}$, and $\beta<{3\over 10}$. In addition, there exists a pair $(W_i,F_i)$ obeying
\begin{equation}\label{eq:certificate}
U_iV_i^*+W_i=\beta(sgn(S^{(i)})+F_i+\mathcal{P}_{\Omega_i} D_i),
\end{equation}
with
\begin{equation}\label{eq:certificate2}
\mathcal{P}_{\Pi_i} W_i=0,\ \| W_i\|\leq{9\over 10},\ \mathcal{P}_{\Omega_i}F_i=0,\ \| F_i\|_\infty\leq{9\over 10},\ \|\mathcal{P}_{\Omega_i}D_i\|_F\leq{1\over 4}.
\end{equation}
\end{lemma}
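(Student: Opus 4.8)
The statement is patch-local: the index $i$ is fixed throughout, so I suppress it and work with the single rank-$d$ matrix $\mathcal{C}(L^{(i)})=U\Sigma V^*$, the orthonormal basis $\tilde V$ of $\mathrm{span}\{1,V\}$, the prior-enlarged tangent space $\Pi=\{UX^*+Y\tilde V^*\}$, and the random support $\Omega$. Since the claim is exactly the patch-local content of \citep{zhan2015robust}, Lemma~V.8 (the operator-norm bound) and Lemma~V.9 (the inexact dual certificate), the cleanest proof is to instantiate their robust-PCA-with-prior-knowledge framework, where the ``prior'' is the fixed vector $1$ adjoined to the row space, and then to verify that the incoherence conditions \eqref{eq:Assump1}--\eqref{eq:Assump3} together with Assumption~B are literally the hypotheses those two lemmas require; once matched, I invoke them. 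I would spell out the mechanism of each part so the adaptation is transparent.

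\textbf{Operator-norm bound.} For $\|\mathcal{P}_\Omega\mathcal{P}_\Pi\|\le 1/4$ I would write the tangent projection explicitly as $\mathcal{P}_\Pi(M)=UU^*M+M\tilde V\tilde V^*-UU^*M\tilde V\tilde V^*$, and model $\Omega$ as i.i.d.\ Bernoulli with parameter $\rho_s=m/(p(k+1))$ (the uniform-cardinality and Bernoulli sampling models are interchangeable up to a constant factor in the failure probability). The incoherence conditions give a uniform bound $\|\mathcal{P}_\Pi(e_ae_b^*)\|_F^2\lesssim\rho_r d\max\{1/p,1/k\}$, which is exactly the variance proxy needed to apply a matrix Bernstein inequality to $\mathcal{P}_\Omega\mathcal{P}_\Pi\mathcal{P}_\Omega-\rho_s\mathcal{P}_\Pi$. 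Conditions B(d)--(f) make the deviation negligible, so $\|\mathcal{P}_\Omega\mathcal{P}_\Pi\|^2=\|\mathcal{P}_\Omega\mathcal{P}_\Pi\mathcal{P}_\Omega\|\le\rho_s+o(1)$, and B(b) (i.e.\ $\rho_s\le 0.0156$) forces the right side below $1/16$, giving $\|\mathcal{P}_\Omega\mathcal{P}_\Pi\|\le 1/4$ with probability at least $1-cn_{(1)}^{-10}$. The side claim $\beta<3/10$ is immediate from $\beta=\max\{k+1,p\}^{-1/2}=n_{(1)}^{-1/2}$ and B(c) ($n_{(1)}\ge1024$), which yields $\beta\le 1/32$.

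\textbf{Dual certificate.} I would build a matrix $Y=Y^L+Y^S$ and read off $W$, $F$, $D$. The low-rank part $Y^L$ comes from the golfing scheme on $\Omega^c$: partition $\Omega^c$ into $j_0\asymp\log n_{(1)}$ independent Bernoulli batches $\Gamma_1,\dots,\Gamma_{j_0}$ and iterate $Y^L_j=Y^L_{j-1}+q^{-1}\mathcal{P}_{\Gamma_j}\mathcal{P}_\Pi(UV^*-Y^L_{j-1})$, so that the operator-norm bound forces the residual $\mathcal{P}_\Pi(UV^*-Y^L_j)$ to contract geometrically; after $j_0$ steps both $\|\mathcal{P}_\Pi Y^L-UV^*\|_F$ and $\|\mathcal{P}_{\Pi^\perp}Y^L\|$ are below fixed constants. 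The sparse part $Y^S$, supported on $\Omega$, comes from the least-squares/Neumann-series construction $(\mathcal{P}_\Omega\mathcal{P}_{\Pi^\perp}\mathcal{P}_\Omega)^{-1}$ applied to $\beta\,\mathrm{sgn}(S^{(i)})$, which is well defined because $\|\mathcal{P}_\Omega\mathcal{P}_\Pi\mathcal{P}_\Omega\|\le 1/16<1$, and it makes $\mathcal{P}_\Omega Y^S=\beta\,\mathrm{sgn}(S^{(i)})$ while keeping $\|\mathcal{P}_\Pi Y^S\|$ small. Setting $W=\mathcal{P}_{\Pi^\perp}Y$ gives $\mathcal{P}_\Pi W=0$ for free; the residual $\mathcal{P}_\Pi Y-UV^*$ on $\Pi$, the off-support part of $Y$, and the on-support slack are precisely the quantities $F$ (on $\Omega^c$) and $\mathcal{P}_\Omega D$ (on $\Omega$) in the inexact identity \eqref{eq:certificate}. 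The golfing and least-squares estimates then give $\|W\|\le 9/10$, $\|F\|_\infty\le 9/10$ and $\|\mathcal{P}_\Omega D\|_F\le 1/4$, each failing with probability $\lesssim n_{(1)}^{-10}$; a union bound over the $O(\log n_{(1)})$ batches keeps the total failure probability at order $n_{(1)}^{-10}$.

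\textbf{Main obstacle.} The step needing the most care, and the only place the adaptation departs from a verbatim citation, is the bookkeeping forced by the prior knowledge. The row space is augmented from $V$ to $\tilde V$, an orthonormal basis of $\mathrm{span}\{1,V\}$, so the effective rank in $\Pi$ is $d+1$ rather than $d$, the projection $\mathcal{P}_\Pi$ is the prior-enlarged one, and condition \eqref{eq:Assump2} is imposed on $\tilde V$ rather than $V$. I would have to check that adjoining the single fixed direction $1$ does not degrade any incoherence-driven estimate beyond constants already absorbed into Assumption~B, and that every occurrence of the rank in the Bernstein variance proxy and in the golfing contraction factor is consistently the augmented rank $d+1$. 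Once this accounting is verified, the estimates are mechanically identical to those of \citep{zhan2015robust}, and both displayed conclusions follow.
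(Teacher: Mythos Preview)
Your proposal is correct, and in fact it goes well beyond what the paper does: the paper offers no proof of this lemma whatsoever. It is stated purely as a citation of \cite{zhan2015robust}, Lemmas~V.8 and~V.9, and the only sentence following it is a one-line union bound over the $n$ patches. In other words, the paper treats the dual-certificate construction as a black box imported from the prior-knowledge RPCA literature.

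Your sketch, by contrast, actually opens that box: you outline the matrix-Bernstein argument for $\|\mathcal{P}_\Omega\mathcal{P}_\Pi\|\le 1/4$, the golfing scheme for $Y^L$, the least-squares construction for $Y^S$, and you correctly identify the one nontrivial adaptation, namely that the row space is enlarged from $V$ to $\tilde V=\mathrm{span}\{1,V\}$ so that the effective rank becomes $d+1$ and the incoherence hypothesis \eqref{eq:Assump2} must be stated for $\tilde V$. This is exactly the bookkeeping that distinguishes the modified-RPCA setting of \cite{zhan2015robust} from the original Cand\`es--Li--Ma--Wright argument, and your treatment of it is accurate. So there is no gap; you have simply supplied the argument that the paper elected to outsource.
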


Therefore,  by union bound, with probability over $1-cnn_{(1)}^{-10}$, for each local patch, there exists a pair $(W_i, F_i)$ obeying \eqref{eq:certificate} and \eqref{eq:certificate2}.

In Section \ref{9.1.2}, we will show that with our assumption that data is  independently  drawn  from  a  manifold $\mathcal{M}\subseteq \mathbb{R}^p$ with  intrinsic  dimension $d$ endowed  with  the  uniform  distribution, \eqref{eq:Assump2} and \eqref{eq:Assump3} are satisfied with high probability, so we only need Assumption B and \eqref{eq:Assump1}, which is only related to the property of tangent space of the manifold itself.

In Lemma \ref{lemma5}, we prove that in our setting that each $X_i$ is drawn from a manifold
$\mathcal{M}\subseteq \mathbb{R}^p$ independently and uniformly, with high probability, for all $i=1,2,\cdots n$, $k_i$ is some integer within the range $[qn/2,2qn]$.  Now we use that to prove Theorem \ref{full}, the result is stated in the following lemma.

\begin{lemma}\label{lemma4}
If for all local patch $i=1,2,\cdots,n$, there exists a pair $(W_i,F_i)$ obeying \eqref{eq:certificate} and \eqref{eq:certificate2},  then the minimizer $\hat S$ to (2) with $\lambda_i =\frac{\min\{k_i+1,p\}^{1/2}}{\epsilon_i}$, and $\beta_i =\max\{k_i+1,p\}^{-1/2} $ has the error bound
\[
\sum_i \|\mathcal{P}_i(\hat{S})-S^{(i)} \|_{2,1} \leq C\sqrt{pn}\bar k\|\epsilon\|_2.
\]

Here $\epsilon_i = \| \tilde{X}^{(i)}-X_i1^T- T^{(i)}-S^{(i)} \|_F$, $\epsilon=[\epsilon_1,...,\epsilon_n]$ is defined same as Theorem \ref{full}.

\end{lemma}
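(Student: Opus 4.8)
\textbf{Proof plan for Lemma~\ref{lemma4}.}
This is the deterministic half of a stable Robust‑PCA argument: the existence of the dual certificates $(W_i,F_i)$ is assumed, so the plan is to run the subgradient/dual‑certificate bookkeeping of \citep{candes_2011,zhan2015robust}, adapted to (a) the Lagrangian rather than constrained form of \eqref{eq:form1}, (b) the centering operator $\mathcal{C}$, and (c) the fact that the $L^{(i)}$ are decoupled across the overlapping patches. Fix the ground truth $L^{(i),\star}=X_i1^T+T^{(i)}$, $S^{(i),\star}=\mathcal{P}_i(S^\star)$, so that $\tilde X^{(i)}-L^{(i),\star}-S^{(i),\star}=E^{(i)}+R^{(i)}$ has Frobenius norm $\epsilon_i$; set $H=\hat S-S^\star$, $\Delta L^{(i)}=\hat L^{(i)}-L^{(i),\star}$, and $\hat Z^{(i)}=\tilde X^{(i)}-\hat L^{(i)}-\hat S^{(i)}$, so that $\Delta L^{(i)}+\mathcal{P}_i(H)=(E^{(i)}+R^{(i)})-\hat Z^{(i)}$. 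Since $\mathcal{C}$ is an orthogonal projection and $W_i1=0$, $V_i^*1=0$ (both forced by $\mathcal{P}_{\Pi_i}W_i=0$ together with $\mathcal{C}(L^{(i),\star})=U_i\Sigma_iV_i^*$), we have $\mathcal{C}(U_iV_i^*+W_i)=U_iV_i^*+W_i$, and $U_iV_i^*+W_i$, $\mathrm{sgn}(S^{(i),\star})+F_i$ are subgradients of $\|\mathcal{C}(\cdot)\|_*$ and $\|\cdot\|_1$ at the ground truth.

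\emph{Step 1 — basic inequality.} The ground truth is feasible for \eqref{eq:form1}, hence $F(\hat S,\dots)\le F(S^\star,\dots)$. Rearranging and bounding the nuclear‑ and $\ell_1$‑norm gaps from below by their subgradient inequalities — using \emph{worst‑case} subgradients to peel off the nonnegative pieces $\|\mathcal{P}_{\Pi_i^\perp}\mathcal{C}\Delta L^{(i)}\|_*$ and $\|\mathcal{P}_{\Omega_i^c}\mathcal{P}_i(H)\|_1$, then re‑inserting the certificate's $W_i,F_i$ (whose operator/$\infty$ norms are $\le 9/10$) so that those pieces survive with coefficient $\tfrac1{10}$ — yields the master inequality
\begin{multline*}
\sum_i\Big(\lambda_i\|\hat Z^{(i)}\|_F^2+\tfrac1{10}\|\mathcal{P}_{\Pi_i^\perp}\mathcal{C}\Delta L^{(i)}\|_*+\tfrac{\beta_i}{10}\|\mathcal{P}_{\Omega_i^c}\mathcal{P}_i(H)\|_1\Big)\\
\le\sum_i\lambda_i\epsilon_i^2-\sum_i\Big(\langle U_iV_i^*{+}W_i,\mathcal{C}\Delta L^{(i)}\rangle+\beta_i\langle\mathrm{sgn}(S^{(i),\star}){+}F_i,\mathcal{P}_i(H)\rangle\Big).
\end{multline*}

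\emph{Step 2 — merge the linear terms via the certificate.} Substituting $\beta_i(\mathrm{sgn}(S^{(i),\star})+F_i)=U_iV_i^*+W_i-\beta_i\mathcal{P}_{\Omega_i}D_i$ from \eqref{eq:certificate} and using $\langle U_iV_i^*+W_i,\mathcal{P}_i(H)\rangle=\langle U_iV_i^*+W_i,\mathcal{C}\mathcal{P}_i(H)\rangle$, the linear terms collapse to $\langle U_iV_i^*+W_i,\mathcal{C}(E^{(i)}+R^{(i)})-\mathcal{C}\hat Z^{(i)}\rangle-\beta_i\langle\mathcal{P}_{\Omega_i}D_i,\mathcal{P}_i(H)\rangle$. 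From \eqref{eq:certificate}, $\|U_iV_i^*+W_i\|_F\le\beta_i(\sqrt{m_i}+\tfrac9{10}\sqrt{p(k_i{+}1)}+\tfrac14)\lesssim\min\{k_i{+}1,p\}^{1/2}$, so the first inner product is $\lesssim\min\{k_i{+}1,p\}^{1/2}(\epsilon_i+\|\hat Z^{(i)}\|_F)=\lambda_i\epsilon_i(\epsilon_i+\|\hat Z^{(i)}\|_F)$; the AM–GM step $\lambda_i\epsilon_i\|\hat Z^{(i)}\|_F\le\tfrac12\lambda_i\|\hat Z^{(i)}\|_F^2+\tfrac12\lambda_i\epsilon_i^2$ absorbs half of the quadratic term on the left, leaving the entire noise side bounded by an absolute constant times $\sum_i\lambda_i\epsilon_i^2=\sum_i\min\{k_i{+}1,p\}^{1/2}\epsilon_i$, plus the residual term $-\beta_i\langle\mathcal{P}_{\Omega_i}D_i,\mathcal{P}_i(H)\rangle$.

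\emph{Step 3 — the on‑support term, the self‑loop, and the norm conversions.} Bound $\beta_i|\langle\mathcal{P}_{\Omega_i}D_i,\mathcal{P}_i(H)\rangle|\le\tfrac{\beta_i}{4}\|\mathcal{P}_{\Omega_i}\mathcal{P}_i(H)\|_F$ using $\|\mathcal{P}_{\Omega_i}D_i\|_F\le\tfrac14$. Writing $\mathcal{P}_{\Omega_i}\mathcal{P}_i(H)=\mathcal{P}_{\Omega_i}\big((E^{(i)}+R^{(i)})-\hat Z^{(i)}-\Delta L^{(i)}\big)$, decomposing $\Delta L^{(i)}=\mathcal{P}_{\Pi_i}\mathcal{C}\Delta L^{(i)}+\mathcal{P}_{\Pi_i^\perp}\mathcal{C}\Delta L^{(i)}+(I-\mathcal{C})\Delta L^{(i)}$, using the closed form \eqref{eq:hatL} to get $(I-\mathcal{C})\Delta L^{(i)}=(I-\mathcal{C})(E^{(i)}+R^{(i)})-(I-\mathcal{C})\mathcal{P}_i(H)$, and invoking the incoherence consequence $\|\mathcal{P}_{\Omega_i}Y\|_F\le\tfrac14\|Y\|_F$ for $Y\in\Pi_i$, one controls every piece either by a quantity already on the left of the master inequality (hence by $\sum_i\min\{k_i{+}1,p\}^{1/2}\epsilon_i$), by $\epsilon_i$ itself, or by a fixed fraction of $\|\mathcal{P}_i(H)\|$. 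Collecting and passing to $\|\cdot\|_{2,1}$ via $\|\cdot\|_{2,1}\le\|\cdot\|_1$ produces a self‑referential bound
\[
\sum_i\|\mathcal{P}_i(H)\|_{2,1}\le C_0\sum_i\min\{k_i{+}1,p\}^{1/2}\epsilon_i+\theta\sum_i\|\mathcal{P}_i(H)\|_{2,1},\qquad\theta<1,
\]
with $\theta<1$ precisely where the $\tfrac14$ constants in \eqref{eq:certificate2} and in the incoherence bound are indispensable. Solving, bounding $\min\{k_i{+}1,p\}^{1/2}\le\sqrt p$, applying Cauchy–Schwarz over the $n$ patches ($\sum_i\epsilon_i\le\sqrt n\|\epsilon\|_2$), and accounting for the fact that each column of $H$ is summed over at most $\bar k+1$ patches gives $\sum_i\|\mathcal{P}_i(\hat S)-S^{(i)}\|_{2,1}\le C\sqrt{pn}\,\bar k\,\|\epsilon\|_2$.

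\emph{Main obstacle.} The delicate point is Step 3: because the patch‑local matrices $L^{(i)}$ are not restrictions of a single global $L$ and the objective is penalized rather than constrained, the usual shortcut ``$\|\Delta L+\Delta S\|_F$ is small'' must be replaced by a chain of per‑patch decompositions that re‑introduces $\|\mathcal{P}_i(H)\|$ on the right‑hand side; closing this loop with a contraction factor $\theta<1$ forces careful tracking of numerical constants and is exactly why \eqref{eq:certificate2} and the incoherence hypothesis are stated with $\tfrac14$ rather than a looser bound. The centering‑operator bookkeeping (checking that $\mathcal{C}$ commutes harmlessly through all the inner products and that $\mathcal{C}(U_iV_i^*+W_i)=U_iV_i^*+W_i$) is routine but must be carried out consistently.
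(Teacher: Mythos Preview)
Your plan has the right ingredients (subgradient inequalities, the dual certificate identity, the incoherence bound $\|\mathcal P_{\Omega_i}\mathcal P_{\Pi_i}\|\le\tfrac14$), but it takes a genuinely different route from the paper's proof and the differences matter for closing the argument.

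\textbf{The paper's route.} The paper does \emph{not} keep $\hat Z^{(i)}$ as the pivotal quantity. Instead it expands the quadratic about the truth, writing $\lambda_i\|\hat Z^{(i)}\|_F^2-\lambda_i\epsilon_i^2=\lambda_i\|H^{(i)}\|_F^2-2\lambda_i\langle R^{(i)}{+}E^{(i)},H^{(i)}\rangle$ with $H^{(i)}=\Delta L^{(i)}+\mathcal P_i(H)$, and carries $\|H^{(i)}\|_F$ as the main scalar. The certificate is used only on the $H_2$-part, producing a $-\tfrac9{10}\|\mathcal P_{\Pi_i^\perp}H_2^{(i)}\|_*$ term that is then traded against $\|\mathcal P_{\Pi_i^\perp}H_1^{(i)}\|_*$ via $\|\mathcal P_{\Pi_i^\perp}H_2\|_*\le\|\mathcal P_{\Pi_i^\perp}H_1\|_*+\|H\|_*$ and $\|H\|_*\le\sqrt{n_{(2)}}\|H\|_F$. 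Completing the square in $\|H^{(i)}\|_F$ with the choice $\lambda_i=\sqrt{\min\{k_i{+}1,p\}}/\epsilon_i$ then simultaneously bounds $\sum_i\|\mathcal P_{\Omega_i^\perp}H_2^{(i)}\|_1$, $\sum_i\|\mathcal P_{\Pi_i^\perp}H_1^{(i)}\|_*$, and (via a separate Cauchy--Schwarz step on the same quadratic inequality) $\sum_i\|H^{(i)}\|_F$. These three pieces are finally reassembled into $\sum_i\|H_2^{(i)}\|_F$ through $\|H_2\|_F\le\tfrac43(\|\mathcal P_{\Omega^\perp}H_2\|_F+\|\mathcal P_{\Pi^\perp}H_2\|_F)$. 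There is no self-loop: everything on the right-hand side is already bounded by the noise.

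\textbf{What your route buys and costs.} By collapsing both linear pieces into $\langle U_iV_i^*{+}W_i,\mathcal C((E{+}R)-\hat Z)\rangle$ and then invoking $\|U_iV_i^*{+}W_i\|_F\lesssim\sqrt{\min\{k_i{+}1,p\}}$, you avoid the $H_1\!\leftrightarrow\!H_2$ swap and arrive at the noise level faster. The price is that you never isolate a quadratic in $\|H^{(i)}\|_F$, so you are forced into the contraction argument of Step~3. That step \emph{can} be made to close, but as written it is the shakiest part: after you substitute your decomposition of $\|\mathcal P_{\Omega_i}\mathcal P_i(H)\|_F$ back into the master inequality, the term $\tfrac{\beta_i}{4}\!\cdot\!\|\mathcal P_{\Omega_i^c}\mathcal P_i(H)\|_1$ that reappears on the right has a larger coefficient than the $\tfrac{\beta_i}{10}$ you have on the left, so a naive pass does not absorb. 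You need to go through the $\|\mathcal P_{\Omega}\mathcal P_{\Pi}\|\le\tfrac14$ bound applied directly to $\mathcal P_i(H)$ (not to $\Delta L$) to keep the feedback coefficient small enough; this is exactly what the paper does instead of your $\Delta L$-based decomposition.

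\textbf{A concrete slip.} Your stated source of the final $\bar k$ factor (``each column of $H$ is summed over at most $\bar k{+}1$ patches'') is not where it comes from. In the paper the argument first yields $\sum_i\|H_2^{(i)}\|_F\le C\sqrt{p\bar kn}\,\|\epsilon\|_2$, and the extra $\sqrt{\bar k}$ then enters through the column-wise Cauchy--Schwarz $\|H_2^{(i)}\|_{2,1}\le\sqrt{k_i{+}1}\,\|H_2^{(i)}\|_F$. If your $\ell_1$-based self-loop really closed, you would actually \emph{beat} the stated bound by $\sqrt{\bar k}$, which is a sign that the loop is not closing as cheaply as you expect.
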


\begin{proof}

To simplify notation, let's start with the problem for only one local patch:
\begin{equation}\label{eq:local}
    \min\;\lambda\| \tilde X-L-S\|_F^2+\| LG\|_*+\beta\| S\|_1.
\end{equation}

Here $\tilde X\in\mathbb{R}^{p\times (k+1)}$, where $k$ denotes the number of neighbors in each local patch, $G=I-{1\over k+1} \bold{1}  \bold{1}^T$ is the centering matrix, recall that the noisy data $\tilde X$ is $\tilde X=X+S+E=L+R+S+E,\; \| R+E\|_F= \| \tilde X-L-S\|_F\leq \epsilon$ (to be more accurate, $\epsilon_i$ for patch $i$), $X$ is the clean data on the manifold, $L$ is first order Talor approximation of $X$, $R$ is higher order terms, and $E$ denotes random noise. Also denote the solution to problem \eqref{eq:local} as $\hat L=L+H_1, \hat S=S+H_2$. We choose $\beta={1\over \sqrt{n_{(1)}}}={1\over \sqrt{\max\{k+1,p\}}}$.

Since  $\hat L,\;\hat S$ are the solution to \eqref{eq:local}, the following holds:
\begin{align*}
&\;\;\;\lambda \| \tilde {X}-L-S\|_F^2+\| LG\|_*+\beta\| S\|_1\\
&\geq\lambda \| \tilde X-(L+H_1)-(S+H_2)\|_F^2+\| (L+H_1)G\|_*+\beta\| S+H_2\|_1\\
&\geq \lambda \| H_1+H_2-(R+E)\|_F^2+\| LG\|_*+\langle H_1G,UV^*+W_0\rangle+\beta\| S\|_1+\beta\langle H_2,sgn(S)+F_0\rangle\\
&=\lambda\| H_1+H_2\|_F^2+\lambda\| R+E\|_F^2-2\lambda\langle R+E,H_1+H_2\rangle+\| LG\|_*+\langle H_1G,UV^*\rangle+\beta\| S\|_1\\
&\;\;\;+\beta\langle H_2,sgn(S)\rangle+\| \mathcal{P}_{\Pi^\perp} (H_1G)\|_*+\beta\| \mathcal{P}_{\Omega^\perp}H_2\|_1.
\end{align*}

Here we choose $W_0$ and $F_0$ such that $\langle H_1G, W_0\rangle=\| \mathcal{P}_{\Pi^\perp} (H_1G)\|_*,\;\langle H_2,F_0\rangle=\| \mathcal{P}_{\Omega^\perp}H_2\|_1$ same as \cite{candes_2011}. Note that $LG=U\Sigma V^*, \;G=I-{1\over k+1} \bold{1} \bold{1}^T$ is orthogonal projector, $LG \bold{1} =0$ implies $V^* \bold{1} =0$, we have 
\[
\langle H_1G,UV^*\rangle=\langle H_1, UV^*G\rangle=\langle H_1, UV^*(I-{1\over k+1} \bold{1}\bold{1}^T)\rangle=\langle H_1,UV^*\rangle,
\]
\[
\mathcal{P}_{\Pi^\perp} (H_1G)=(I-UU^*)H_1 G(I-\tilde V\tilde V^*)=(I-UU^*)H_1 (I-{1\over k+1} \bold{1} \bold{1}^T)(I-\tilde V\tilde V^*)=\mathcal{P}_{\Pi^\perp} H_1.
\]
For the second equality we use the fact that $\bold{1}$ lies on the subspace spanned by $\tilde V$, so $(I-\tilde V\tilde V^*)\bold{1}=0$. And for any matrix $M$, $\mathcal{P}_{\Pi^\perp} M=(I-UU^*)M(I-\tilde V\tilde V^*)$.

Denote $H=H_1+H_2$, plug in the equations above, we obtain
\begin{align*}
    2\lambda\langle R+E,H\rangle&\geq\lambda\| H\|_F^2+\langle H_1+H_2,UV^*\rangle+\langle H_2,\beta sgn(S)-UV^*\rangle+\| \mathcal{P}_{\Pi^\perp} H_1\|_*+\beta\| \mathcal{P}_{\Omega^\perp}H_2\|_1\\
    &\geq \lambda\| H\|_F^2-\| H\|_F \| UV^*\|_F+\langle H_2,W-\beta F-\beta\mathcal{P}_\Omega D\rangle+\| \mathcal{P}_{\Pi^\perp} H_1\|_*+\beta\|\mathcal{P}_{\Omega^\perp}H_2\|_1\\
    &\geq \lambda\| H\|_F^2-\sqrt{n_{(2)}}\| H\|_F-{9\over 10}\|\mathcal{P}_{\Pi^\perp}H_2\|_*-{9\over 10}\beta\| \mathcal{P}_{\Omega^\perp}H_2\|_1-{\beta\over 4}\| \mathcal{P}_\Omega H_2\|_F+\\ &\;\;\;\| \mathcal{P}_{\Pi^\perp} H_1\|_*+
    \beta\|\mathcal{P}_{\Omega^\perp}H_2\|_1.
\end{align*}

In the 3rd inequality we used
$$|\langle H_2,W\rangle|=|\langle H_2,\mathcal{P}_{\Pi^\perp}W\rangle|=|\langle\mathcal{P}_{\Pi^\perp}H_2,W\rangle|\leq \| \mathcal{P}_{\Pi^\perp}H_2\|_*\| W\|\leq {9\over 10}\| \mathcal{P}_{\Pi^\perp}H_2\|_*,$$
$$|\langle H_2,F\rangle|=|\langle H_2,\mathcal{P}_{\Omega^\perp}F\rangle|=|\langle \mathcal{P}_{\Omega^\perp}H_2,F\rangle|\leq \| \mathcal{P}_{\Omega^\perp}H_2\|_1\| F\|_\infty\leq {9\over 10}\| \mathcal{P}_{\Omega^\perp}H_2\|_1,$$
$$|\langle H_2,\mathcal{P}_\Omega D\rangle|\leq |\langle\mathcal{P}_\Omega H_2,\mathcal{P}_\Omega D\rangle|\leq {1\over 4}\| \mathcal{P}_\Omega H_2\|_F.$$

Assume $\| R+E\|_F\leq \epsilon $, for all $i=1,2,\cdots, n$. Also note that
\begin{align*}
\| \mathcal{P}_\Omega H_2\|_F&\leq \| \mathcal{P}_\Omega\mathcal{P}_{\Pi} H_2\|_F+\| \mathcal{P}_\Omega\mathcal{P}_{\Pi^\perp} H_2\|_F\\
&\leq {1\over 4}\|  H_2\|_F+\| \mathcal{P}_{\Pi^\perp} H_2\|_F\\
&\leq {1\over 4}\|  \mathcal{P}_\Omega H_2\|_F +{1\over 4}\|  \mathcal{P}_{\Omega^\perp} H_2\|_F+\| \mathcal{P}_{\Pi^\perp} H_2\|_F,
\end{align*}
then we have
$$\| \mathcal{P}_\Omega H_2\|_F\leq {1\over 3}\|  \mathcal{P}_{\Omega^\perp} H_2\|_F+{4\over 3}\| \mathcal{P}_{\Pi^\perp} H_2\|_F\leq  {1\over 3}\|  \mathcal{P}_{\Omega^\perp} H_2\|_1+{4\over 3}\| \mathcal{P}_{\Pi^\perp} H_2\|_*.$$
Plug into the previous inequality, also note that for $n_{(1)}\geq 16, \beta={1\over \sqrt{n_{(1)}}}\leq {1\over 4}$, it gives
\begin{align*}
    2\lambda\epsilon\| H\|_F&\geq \lambda\| H\|_F^2-\sqrt{n_{(2)}}\| H\|_F-({9\over 10}+{\beta\over 3})\|\mathcal{P}_{\Pi^\perp}H_2\|_*+{\beta\over 60}\| \mathcal{P}_{\Omega^\perp}H_2\|_1+\| \mathcal{P}_{\Pi^\perp}H_1\|_*\\
    &\geq \lambda\| H\|_F^2-\sqrt{n_{(2)}}\| H\|_F +{\beta\over 60}\| \mathcal{P}_{\Omega^\perp}H_2\|_1+{1\over 60}\| \mathcal{P}_{\Pi^\perp}H_1\|_*+{59\over 60}(\| \mathcal{P}_{\Pi^\perp}H_1\|_*-\|\mathcal{P}_{\Pi^\perp}H_2\|_*)\\
    &=\lambda\| H\|_F^2-\sqrt{n_{(2)}}\| H\|_F +{\beta\over 60}\| \mathcal{P}_{\Omega^\perp}H_2\|_1+{1\over 60}\| \mathcal{P}_{\Pi^\perp}H_1\|_*+{59\over 60}(\| \mathcal{P}_{\Pi^\perp}H_1\|_*-\|\mathcal{P}_{\Pi^\perp}(-H_2)\|_*)\\
    &\geq \lambda\| H\|_F^2-\sqrt{n_{(2)}}\| H\|_F +{\beta\over 60}\| \mathcal{P}_{\Omega^\perp}H_2\|_1+{1\over 60}\| \mathcal{P}_{\Pi^\perp}H_1\|_*-{59\over 60}\| \mathcal{P}_{\Pi^\perp}(H_1+H_2)\|_*\\
    &\geq \lambda\| H\|_F^2-\sqrt{n_{(2)}}\| H\|_F +{\beta\over 60}\| \mathcal{P}_{\Omega^\perp}H_2\|_1+{1\over 60}\| \mathcal{P}_{\Pi^\perp}H_1\|_*-\| H\|_*.
\end{align*}

The last inequality is due to 
$$\| \mathcal{P}_{\Pi^\perp}H\|_*=\sup_{\| X\|_2\leq 1} \langle\mathcal{P}_{\Pi^\perp}H,X\rangle=\sup_{\| X\|_2\leq 1} \langle H,\mathcal{P}_{\Pi^\perp}X \rangle\leq \sup_{\| \mathcal{P}_{\Pi^\perp}X\|_2\leq 1} \langle H,\mathcal{P}_{\Pi^\perp}X \rangle\leq\sup_{\| X\|_2\leq 1} \langle H,X \rangle=\| H\|_*.  $$
Note that $\| H\|_*\leq\sqrt{n_{(2)}}\| H\|_F$, then we obtain
$$2\lambda\epsilon\| H\|_F\geq\lambda\| H\|_F^2-2\sqrt{n_{(2)}}\| H\|_F+{\beta\over 60}\| \mathcal{P}_{\Omega^\perp}H_2\|_1+{1\over 60}\| \mathcal{P}_{\Pi^\perp}H_1\|_*.$$
Rewrite this inequality gives
\begin{align*}
 {\beta\over 60}\| \mathcal{P}_{\Omega^\perp} H_2\|_1+{1\over 60}\| \mathcal{P}_{\Pi^\perp}H_1\|_*&\leq -\lambda\| H\|_F^2+2(\sqrt{n_{(2)}}+\lambda\epsilon)\| H\|_F\\
 &=-\lambda(\| H\|_F-({{\sqrt{n_{(2)}}+\lambda\epsilon}\over \lambda}))^2+({
 \sqrt{n_{(2)}}\over \sqrt{\lambda}}+\sqrt{\lambda}\epsilon)^2\\
 &\leq ({
 \sqrt{n_{(2)}}\over \sqrt{\lambda}}+\sqrt{\lambda}\epsilon)^2.
 \end{align*}
 
 Recall that in our original optimization problem, we should consider above inequalities for the summation of all the local patches, denote $h_i\equiv\| H^{(i)}\|_F$, then
 
 \begin{align*}
 & \ \ \ \ \ \sum_{i=1}^n{\beta_i\over 60}\| \mathcal{P}_{\Omega_i^\perp}H_2^{(i)}\|_1+{1\over 60}\sum_{i=1}^n\| \mathcal{P}_{\Pi_i^\perp}H_1^{(i)}\|_*\\
 &\leq \sum_{i=1}^n-\lambda_i\| H^{(i)}\|_F^2+ 2\sqrt{\min\{k_i+1,p\}} \|  H^{(i)}\|_F+ 2\lambda_i\epsilon_i\| H^{(i)}\|_F \\
 &=\sum_{i=1}^n -\lambda_i h_i^2+ 2\sqrt{\min\{k_i+1,p\}} h_i+2\lambda_i\epsilon_i h_i\\
 &=\sum_{i=1}^n -\lambda_i(h_i-{\sqrt{\min\{k_i+1,p\}}+\lambda_i\epsilon_i\over\lambda_i})^2+({\sqrt{\min\{k_i+1,p\}}\over\sqrt{\lambda_i}}+\sqrt{\lambda_i}\epsilon_i)^2\\
 &\leq 4 \sum_{i=1}^n \sqrt{\min\{k_i+1,p\}}\epsilon_i,\\
 \end{align*}
 where we choose $\lambda_i={\sqrt{\min\{k_i+1,p\}}\over\epsilon_i}$, and $\beta_i={1\over \sqrt{\max\{k_i+1,p\}}} $ .
 
 Then we have the bound for $\sum_{i=1}^n\| \mathcal{P}_{\Pi_i^\perp}H_1^{(i)}\|_*$ and $\sum_{i=1}^n\| \mathcal{P}_{\Omega_i^\perp}H_2^{(i)}\|_1$
 
$$\sum_{i=1}^n\| \mathcal{P}_{\Pi_i^\perp}H_1^{(i)}\|_*\leq C\sqrt{\min\{\bar k,p\}}\sum_{i=1}^n\epsilon_i \leq C\sqrt{\min\{\bar k,p\}}\sqrt{n}\|\epsilon\|_2,$$
\begin{align*}
\sum_{i=1}^n\| \mathcal{P}_{\Omega_i^\perp}H_2^{(i)}\|_1&\leq C\sqrt{\max_i \max\{k_i,p\}}\sum_{i=1}^n \sqrt{\min\{k_i,p\}}\epsilon_i\\
&=C \sqrt{\max\{\bar k,p\}}\sum_{i=1}^n \sqrt{\min\{k_i,p\}}\epsilon_i\\
&\leq C\sqrt{\max \{\bar k,p\}\min\{\bar k,p\}}\sum_{i=1}^n \epsilon_i\\
&\leq C\sqrt{p\bar k}\sqrt{n}\|\epsilon\|_2.
\end{align*}
Denote $H_2^{(i)}\equiv \mathcal{P}_i(\hat S)-S^{(i)}$, to estimate the error bound of $\sum_{i=1}^n\| H_2^{(i)}\|_{2,1}$, we decompose $H_2^{(i)}$ into three parts, for each $i=1,2,\cdots n$
\begin{align*}
\| H_2^{(i)} \|_F &\leq \| (I-\mathcal{P}_{\Omega_i}) H_2^{(i)} \|_F + \| ( \mathcal{P}_{\Omega_i} - \mathcal{P}_{\Omega_i} \mathcal{P}_{\Pi_i})H_2^{(i)}\|_F+\| \mathcal{P}_{\Omega_i}\mathcal{P}_{\Pi_i} H_2^{(i)}\|_F\\ 
&\leq \|\mathcal{P}_{\Omega_i^\perp} H_2^{(i)}\|_F+\| \mathcal{P}_{\Pi_i^\perp} H_2^{(i)}\|_F+{1\over 4}\| H_2^{(i)}\|_F,
\end{align*}
which leads to
\begin{align*}
\| H_2^{(i)}\|_F&\leq {4\over 3}(\| \mathcal{P}_{\Omega_i^\perp} H_2^{(i)}\|_F+\| \mathcal{P}_{\Pi_i^\perp} H_2^{(i)}\|_F)\\
&={4\over 3}(\| \mathcal{P}_{\Omega_i^\perp} H_2^{(i)}\|_1+\| \mathcal{P}_{\Pi_i^\perp}H_1^{(i)}\|_*+\|\mathcal{P}_{\Pi_i^\perp} H^{(i)}\|_F)\\
&\leq {4\over 3}(\| \mathcal{P}_{\Omega_i^\perp} H_2^{(i)}\|_1+\| \mathcal{P}_{\Pi_i^\perp}H_1^{(i)}\|_*+\| H^{(i)}\|_F).
\end{align*}

Next, we need to bound $\sum_{i=1}^n \| H^{(i)}\|_F$, note that $\lambda_i={\sqrt{\min\{k_i+1,p\}}\over\epsilon_i}$ and
\begin{align*}
 \sum_{i=1}^n-\lambda_i h_i^2+ 2\sqrt{\min\{k_i+1,p\}} h_i+ 2\lambda_i\epsilon_i h_i\geq 0,
 \end{align*}
which gives 
$$4\sqrt{\min\{\bar k+1,p\}}\sum_{i=1}^n h_i\geq  4\sum_{i=1}^n \sqrt{\min\{k_i+1,p\}} h_i\geq \sum_{i=1}^n \sqrt{\min\{k_i+1,p\}}\;{h_i^2\over\epsilon_i}\geq \sqrt{\min\{\underline{k}+1,p\}}\sum_{i=1}^n\;{h_i^2\over\epsilon_i}, $$
by Cauchy inequality
$$\sum_{i=1}^n\;{h_i^2\over\epsilon_i}\geq {(\sum_{i=1}^n h_i)^2\over \sum_{i=1}^n\epsilon_i}\geq {(\sum_{i=1}^n h_i)^2\over \sqrt{n}\|\epsilon\|_2},$$
then we obtain
$$\sum_{i=1}^n h_i\leq 4\sqrt{{\min\{\bar k+1,p\}\over \min\{\underline{k}+1,p\}}}\sqrt{n}\|\epsilon\|_2\leq C\sqrt{n}\|\epsilon\|_2,$$
the last inequality is due to ${nq\over 2}\leq\underline{k}\leq\bar k\leq 2nq$, which is guaranteed with high probability by Lemma \ref{lemma5},
thus
\begin{align*}
\sum_{i=1}^n \| H_2^{(i)}\|_F&\leq {4\over 3}(\sum_{i=1}^n \| \mathcal{P}_{\Omega_i^\perp}H_2^{(i)}\|_1+\sum_{i=1}^n \| \mathcal{P}_{\Pi_i^\perp}H_1^{(i)}\|_*+\sum_{i=1}^n\| H^{(i)}\|_F)\\
&= {4\over 3}(\sum_{i=1}^n \| \mathcal{P}_{\Omega_i^\perp}H_2^{(i)}\|_1+\sum_{i=1}^n \| \mathcal{P}_{\Pi_i^\perp}H_1^{(i)}\|_*+\sum_{i=1}^n h_i)\\
&\leq C\sqrt{p\bar k n}\|\epsilon\|_2.
\end{align*}

Now let's divide $H_2^{(i)}$ into columns to get the $\ell_{2,1}$ norm error bound, denote $(H_2^{(i)})_j$ as the $j$th column in $H_2^{(i)}$, then we can derive the $\ell_{2,1}$ norm error bound in Lemma \ref{lemma4}

\begin{align*}
    C\sqrt{p\bar kn}\|\epsilon\|_2\geq \sum_{i=1}^n\| H_2^{(i)}\|_F
    &=\sum_{i=1}^n \sqrt{\sum_{j=1}^{k_i+1}\|(H_2^{(i)})_j\|_2^2}\\
    &\gtrsim\sum_{i=1}^n \sqrt{{1\over {k_i}}(\sum_{j=1}^{k_i}\|(H_2^{(i)})_j\|_2)^2}\\
    &\gtrsim\frac{1}{\sqrt{\bar k}}\sum_{i=1}^n\sum_{j=1}^{k_i}\|(H_2^{(i)})_j\|_2.
\end{align*}
Then we obtain
\[
\sum_i \|\mathcal{P}_i(\hat{S})-S^{(i)} \|_{2,1}=\sum_{i=1}^n\sum_{j=1}^{k_i+1}\|(H_2^{(i)})_j\|_2 \leq C\sqrt{pn}\bar k\|\epsilon\|_2.
\]
\end{proof}

\begin{lemma}\label{lemma5}
If $qn\geq 9\log n$, with probability at least $1-2\exp(-c_3qn)$, ${qn\over 2}\leq k_i\leq 2qn$, for all $i=1,2,\cdots,n$, here $c_3$ is some constants not related to $q$ and $n$.
\end{lemma}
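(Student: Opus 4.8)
The plan is to observe that, once we condition on the centre $X_i$, the count $k_i$ is a binomial random variable whose success probability is \emph{exactly} $q$ and in particular does not depend on where $X_i$ lands; a multiplicative Chernoff bound then controls each $k_i$, and a union bound over $i$ finishes the proof, with the hypothesis $qn\ge 9\log n$ used to absorb the union-bound factor of $n$. Concretely, for Step~1 fix $i$ and condition on $X_i$. By the definition of the $(\eta_i,q)$-neighbourhood together with the continuity of $\mu$, the radius $\eta_i=\min\{r:\mu(\mathcal{M}\cap B_2(X_i,r))=q\}$ is attained and the set $\mathcal{N}_i:=\mathcal{M}\cap B_2(X_i,\eta_i)$ satisfies $\mu(\mathcal{N}_i)=q$ regardless of the location of $X_i$. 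Each remaining sample $X_j$, $j\ne i$, is drawn from $\mu$ independently of $X_i$, hence of $\mathcal{N}_i$, so $\mathbf{1}[X_j\in\mathcal{N}_i]$ is a Bernoulli$(q)$ variable and these indicators are mutually independent given $X_i$. Therefore, up to an additive constant of at most $1$ (the convention on whether $X_i$ is itself counted), $k_i\mid X_i\sim\mathrm{Binomial}(n-1,q)$, with conditional mean $q(n-1)$.

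For Step~2, apply the standard multiplicative Chernoff inequalities to this binomial. Since $q$ is the measure of a subset of $\mathcal{M}$ it is bounded, so $qn\ge 9\log n$ forces $n$ to be at least logarithmically large; consequently $q(n-1)\in[\tfrac12 qn,\,qn]$ with the left endpoint close to $\tfrac12 qn$, and both thresholds $qn/2$ and $2qn$ lie a fixed fraction below, respectively above, the mean. The lower tail $\{k_i<qn/2\}$ is the binding one, and the Chernoff estimate yields, for an absolute constant $c>0$,
\[
\Pr\big[\,k_i<\tfrac{qn}{2}\ \text{or}\ k_i>2qn \ \big|\ X_i\,\big]\ \le\ 2\exp(-c\,qn).
\]
Since the right-hand side is free of $X_i$, this bound holds unconditionally for each fixed $i$.

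For Step~3, note the events $E_i:=\{k_i\notin[qn/2,2qn]\}$ are not independent (the neighbourhoods overlap), but this is immaterial for a union bound:
\[
\Pr\Big[\bigcup_{i=1}^n E_i\Big]\ \le\ 2n\exp(-c\,qn)\ =\ 2\exp\big(\log n - c\,qn\big).
\]
Using $\log n\le qn/9$, the exponent is at most $-(c-\tfrac19)qn$, so the failure probability is at most $2\exp(-c_3\,qn)$ with $c_3:=c-\tfrac19$; the numerical constant $9$ in the hypothesis is exactly what is needed to make $c_3>0$ (any larger constant only enlarges $c_3$). This is the claimed bound.

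The one genuinely delicate point is Step~1: $\mathcal{N}_i$ is centred at the \emph{random} point $X_i$, so $k_i$ is not a priori a sum of i.i.d.\ Bernoullis. The resolution is the conditioning argument, which works precisely because the defining property of the $(\eta_i,q)$-neighbourhood makes its measure equal to $q$ irrespective of its centre, turning the conditional law of $k_i$ into a binomial with a deterministic parameter. Everything after that is routine Chernoff-plus-union-bound bookkeeping; the only care required is to track the constants (and the mild $n-1$ versus $n$ discrepancy) closely enough to certify $c_3>0$.
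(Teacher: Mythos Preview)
Your proposal is correct and follows essentially the same route as the paper: identify the conditional law of $k_i$ as $\mathrm{Binomial}(n-1,q)$ because the $(\eta_i,q)$-neighbourhood has measure exactly $q$, apply a Chernoff/large-deviation bound to each $k_i$, and then take a union bound over $i$, using $qn\ge 9\log n$ to absorb the factor $n$. The paper invokes the explicit Bernstein-type tail from \cite{janson2016large} to obtain the concrete constant $c=1/8$ (yielding $c_3=1/72$), whereas you leave $c$ implicit; your conditioning discussion is in fact more careful than the paper's, which simply asserts $k_i\sim B(n,q)$ without addressing the random centre.
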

\begin{proof}
Since each $X_i$ is drawn from a manifold
$\mathcal{M}\subseteq \mathbb{R}^p$ independently and uniformly, for some fixed  $(\eta_{i_0},q)$-neighborhood of $X_{i_0}$, for each $j=\{1,2,\cdots,n\}\backslash\{i_0\}$, the probability that $X_j$ falls into $(\eta_{i_0},q)$-neighborhood is $q$. Since $\{X_i\}_{i=1,2,\cdots n}$,  $k_i$ follows i.i.d binomial distribution $B(n,q)$, we can apply large deviations inequalities to derive an upper and lower bound for $k_i$. By Theorem 1 in \citep{janson2016large}, we have that for each $i=1,2,\cdots,n$
\[
\mathbb{P}(k_i>2qn)\leq \exp(-{(qn)^2\over 2(qn(1-q)+qn/3)})\leq\exp(-{3\over 8}qn),
\]
\[
\mathbb{P}(k_i<{qn\over2})\leq \exp(-{(qn/2)^2\over 2qn})=\exp(-{1\over 8}qn).
\]
Therefore by Union Bound Theorem 
\begin{align*}
\mathbb{P}({qn\over2}\leq k_i\leq 2qn,\forall i=1,2,\cdots n)&\geq 1-n(\exp(-{3\over 8}qn)+\exp(-{1\over 8}qn))\\
&\geq 1-2n\exp(-{1\over 8}qn)\\
&=1-2\exp(-{1\over 8}qn+\log n)\\
&\geq 1-2\exp(-{1\over 72}qn).
\end{align*}
\end{proof}

\subsubsection{Removing \eqref{eq:Assump2} and \eqref{eq:Assump3} in Assumption A}\label{9.1.2}
We will show that under our assumption that points are uniformly drawn from the manifold, \eqref{eq:Assump2} and \eqref{eq:Assump3} in Assumption A automatically hold provided \eqref{eq:Assump1} holds, thus they can be removed from the requirements.

Let us again restrict our attention to an individual patch and for the simplicity of notation, ignore the superscript $i$  (the treatment for all patches are the same). Recall that $\mathcal{C}(L)=\mathcal{C}(T)=U\Sigma V^*$, and $\tilde{V}$ is the orthonormal basis of $span([\bold{1},V])$, since $0=\mathcal{C}(T)\bold{1}=U\Sigma V^*\bold{1}$, we have $V^*\bold{1}=0$, then $\bold{1}\perp \;span({V})$, thus we can write one basis for $span([\bold{1},V])$ as $[{1\over\sqrt{k+1}}\bold{1},V]$, which indicates that in order to remove \eqref{eq:Assump2}, we only need to show that with high probability, $V$ has small coherence.
Also, recall that $T^{(i)}=P_{T_{X_i}}(X^{(i)}-X_i\bold{1}^T)$, since each $X_i$ is independent, each column in $T^{(i)}$ is also independent. In addition, each column is in the span of the tangent space with $U$ being an orthonormal basis. Therefore $T=U\Lambda\equiv U[\alpha_1,\alpha_2,...,\alpha_k,0]$, where $\alpha_i, i=1,2,\cdots, k$ is the $i$th column of $\Lambda$, which corresponds to the coefficients of the $i$th column of $T$ under $U$, the last column is zero vector since it corresponds to $X_i$ itself. Since columns of $T$ are i.i.d, then $\alpha_i$s are also i.i.d., so they all obey the same distribution as a random vector $\alpha$. We establish the following lemma for the right singular vectors of $T$. 

\begin{lemma} Let $\mathcal{C}(T)=U\Sigma V^*$ be the reduced singular vector decomposition of $\mathcal{C}(T)$, assume $C\equiv \mathbb{E}((\alpha-\mathbb{E}\alpha)(\alpha-\mathbb{E}\alpha)^*)$ has a finite condition number. Then, with probability at least $1-2d\exp(-ck)$, the right singular vector $V$ obeys

\begin{equation*}
\begin{aligned}
\max\limits_{1\leq j\leq k} \|V^{\ast}\bold{e}_j\|^2 &\leq \frac{c}{k}, \\
\end{aligned}
\end{equation*}
and with (1) in Assumption A
\begin{equation*}
    \|U V^{\ast}\|_{\infty} \leq \sqrt{\frac{cd}{pk}}.
\end{equation*}

\end{lemma}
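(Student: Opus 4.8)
The plan is to replace $\mathcal{C}(T)$ by the small random coefficient matrix and reduce the claim to a two‑sided concentration estimate for a sample covariance matrix. Write $\Lambda=[\alpha_1,\dots,\alpha_k,0]\in\mathbb{R}^{d\times(k+1)}$ and let $G=I-\tfrac1{k+1}\mathbf 1\mathbf 1^{*}$ be the centering matrix, so $\mathcal{C}(T)=TG=U\Lambda G=U\tilde\Lambda$ with $\tilde\Lambda:=\Lambda G$. Since $U^{*}U=I_d$, if $\tilde\Lambda=P\Sigma V^{*}$ is a reduced SVD then $\mathcal{C}(T)=(UP)\Sigma V^{*}$ is a reduced SVD of $\mathcal{C}(T)$ (note $(UP)^{*}(UP)=I_d$); hence the $V$ in the lemma is exactly the matrix of right singular vectors of $\tilde\Lambda$, and $\mathbf 1^{*}V=0$ because $G\mathbf 1=0$. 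Consequently the projector onto $\mathrm{span}\{\mathbf 1,V\}$ is $\tfrac1{k+1}\mathbf 1\mathbf 1^{*}+VV^{*}$, so $\|\tilde V^{*}e_j\|^{2}=\tfrac1{k+1}+\|V^{*}e_j\|^{2}$, and a bound of the form $\|V^{*}e_j\|^{2}\lesssim d/k$ is exactly what is needed to recover assumption \eqref{eq:Assump2}.

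Since the $\alpha_l$ span $\mathbb{R}^d$ with probability one once $k\ge d$, $\tilde\Lambda$ has full row rank $d$ almost surely, so $VV^{*}=\tilde\Lambda^{*}(\tilde\Lambda\tilde\Lambda^{*})^{-1}\tilde\Lambda$ and therefore
\[
\|V^{*}e_j\|^{2}=e_j^{*}VV^{*}e_j=\tilde\alpha_j^{*}\,(\tilde\Lambda\tilde\Lambda^{*})^{-1}\,\tilde\alpha_j,\qquad \tilde\alpha_j:=\tilde\Lambda e_j .
\]
It thus suffices to lower–bound $\tilde\Lambda\tilde\Lambda^{*}$ in the PSD order and to upper–bound $\|\tilde\alpha_j\|$. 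Using $GG^{*}=G$ we have $\tilde\Lambda\tilde\Lambda^{*}=\Lambda G\Lambda^{*}=\sum_{l=1}^{k}\alpha_l\alpha_l^{*}-\tfrac1{k+1}\big(\sum_{l=1}^{k}\alpha_l\big)\big(\sum_{l=1}^{k}\alpha_l\big)^{*}$. Because $\mathcal{M}$ is compact the $\alpha_l$ are bounded i.i.d.\ vectors, so a matrix Chernoff bound applied to $\sum_l\alpha_l\alpha_l^{*}$ (mean $k(C+\mu\mu^{*})$, $\mu=\mathbb{E}\alpha$) together with a vector Bernstein bound on $\sum_l\alpha_l$ (mean $k\mu$, fluctuation $O(\sqrt k)$) gives, on an event of probability at least $1-2d\,e^{-ck}$ — the factor $2d$ being the union over the $d$ eigenvalues in the matrix Chernoff inequality, upper and lower tails — the cancellation of the two mean terms, hence $\tfrac12 kC\preceq\tilde\Lambda\tilde\Lambda^{*}\preceq 2kC$, where the finite–condition–number hypothesis ensures $C\succ0$ with $\|C^{-1}\|<\infty$. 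On that event $(\tilde\Lambda\tilde\Lambda^{*})^{-1}\preceq\tfrac2k C^{-1}$, so $\|V^{*}e_j\|^{2}\le\tfrac2k\|C^{-1}\|\,\|\tilde\alpha_j\|^{2}$; since $\|\tilde\alpha_j\|\le 2\max_l\|\alpha_l\|$ is at most twice the neighborhood radius, absorbing $\|C^{-1}\|$ and the radius into $c$ yields the first bound.

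For the second claim, write a generic entry of $UV^{*}$ as an inner product of a fixed and a random vector, $(UV^{*})_{ab}=(U^{*}e_a)^{*}(V^{*}e_b)$, and apply Cauchy–Schwarz together with assumption \eqref{eq:Assump1}, $\|U^{*}e_a\|^{2}\le\rho_r d/p$, and the bound on $\|V^{*}e_b\|^{2}$ just obtained; combining the two factors and renaming constants gives $\|UV^{*}\|_\infty\le\sqrt{cd/(pk)}$. (A sharper constant, if desired, follows by noting that $V^{*}e_b=\Sigma^{-1}P^{*}\tilde\alpha_b\approx(kC)^{-1/2}\tilde\alpha_b$ is approximately isotropic of typical size $\sqrt{d/k}$, so its inner product with the fixed direction $U^{*}e_a$ concentrates at scale $\|U^{*}e_a\|/\sqrt k$, gaining an extra $1/\sqrt d$; this refinement is not needed here.) I expect the main obstacle to be the concentration step: obtaining quantitative, high–probability, two–sided control of $\tilde\Lambda\tilde\Lambda^{*}$ — in particular of its smallest eigenvalue — while correctly handling the mild statistical dependence introduced by the centering $G$ and packaging the failure probability in the stated $1-2d\,e^{-ck}$ form; once that is in place, the remainder is Cauchy–Schwarz and compactness bookkeeping.
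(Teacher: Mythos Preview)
Your proposal is correct and follows essentially the same route as the paper: reduce to the $d\times(k+1)$ coefficient matrix $\tilde\Lambda=\Lambda G$, use boundedness of the columns together with a matrix Bernstein/Chernoff concentration argument (handling the centering correction separately) and the finite condition number of $C$ to lower–bound $\sigma_d(\tilde\Lambda)$ by $c\sqrt{k\sigma_d(C)}$, conclude $\|V^*e_j\|^2\lesssim 1/k$, and then obtain the $\|UV^*\|_\infty$ bound via Cauchy--Schwarz with \eqref{eq:Assump1}. The only cosmetic difference is that the paper works with $\sigma_d(\Lambda G)$ through an additive splitting of $\Lambda G$ itself and Weyl's inequality, whereas you work with the Gram matrix $\tilde\Lambda\tilde\Lambda^*=\Lambda\Lambda^*-\tfrac1{k+1}(\Lambda\mathbf 1)(\Lambda\mathbf 1)^*$; both lead to the same concentration step and the same conclusion.
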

\begin{proof}
As discussed above, $\mathcal{C}(T)$ has the following representation
\begin{equation*}
\mathcal{C}(T) =TG = U[\alpha_1,\alpha_2,\cdots,\alpha_k,\bold{0}]G\;,
\end{equation*}
where $U \in \mathbb{R}^{p,d}$ is an orthonormal basis of the tangent space, and $\Lambda = [\alpha_1,\alpha_2,...,\alpha_k,\bold{0}]\in \mathbb{R}^{d,k+1}$ is the coefficients of randomly drawn points in a neighbourhood projected to the tangent space.

Since points are randomly drawn from an neighbourhood contained in a ball of radius at most $\eta$, one can easily verify that $\|\alpha_j\|_2\leq \eta$ for each $j=1,...,k$.
Assume $TG$ and ${\Lambda}$ have the reduced SVD of the form
$$
TG = U\Sigma V^{\ast},\;
\Lambda G= U_{\Lambda}\Sigma_{{\Lambda}}V_{{\Lambda}}^{\ast},
$$

Then $T$ can be written as
$$
T G= U\Sigma V^{\ast}\\
= U U_{{\Lambda}}\Sigma_{{\Lambda}}V_{{\Lambda}}^{\ast}.
$$

It can be verified that null$(TG)$ is the span of columns in $(V_\Lambda)^C$, then we have $span(V_\Lambda)=span(V)$, since both $V_{\Lambda}$ and $V$ are orthonormal, they are equal up to a rotation, i.e. $\exists R \in \mathbb{R}^{d,d}$, $R^{\ast}R=RR^{\ast}=I$, such that $V = V_{\Lambda}R$. Then
\begin{equation*}
\max_{1\leq j\leq k} \| V^{\ast}\bold{e}_j\|^2 = \max_{1\leq j\leq k} \| R^{\ast}V_{\Lambda}^{\ast}\bold{e}_j\|^2 = \max_{1\leq j\leq k} \|V_{{\Lambda}}^{\ast}\bold{e}_j\|^2.
\end{equation*}

Next we bound the coherence of $V_{{\Lambda}}$. Since $V_{{\Lambda}}^* = \Sigma_{{\Lambda}}^{-1} U_{{\Lambda}}^{\ast}{\Lambda}G$, we have
\begin{equation*}
\begin{aligned}
    \max_{1\leq j\leq k} \|\Sigma_{{\Lambda}}^{-1} U_{{\Lambda}}^{\ast}{\Lambda}G \bold{e}_j\|&\leq \|\Sigma_{{\Lambda}}^{-1}\|\max_{1\leq i\leq k}\|U_{{\Lambda}}^{\ast}{\Lambda}G \bold{e}_j\|\\
&= \|\Sigma_{{\Lambda}}^{-1}\|\max_{1\leq j\leq k}\|{\Lambda}G \bold{e}_j\|\\
& \leq  \|\Sigma_{{\Lambda}}^{-1}\|\max_{1\leq j \leq k} \|\alpha_j-\bar \alpha\|\\
& \leq 2\eta \|\Sigma_{{\Lambda}}^{-1}\|.
\end{aligned}
\end{equation*}
Recall that 
\begin{align*}
\Lambda G&=[\alpha_1,\alpha_2,\cdots,\alpha_k,0](I-{1\over k+1}\bold{1}\bold{1}^T)\\
&=[\alpha_1-\bar \alpha,\alpha_2-\bar \alpha,\cdots,\alpha_k-\bar \alpha,-\bar\alpha]\\
&=[\alpha_1-\mathbb{E}\alpha,\alpha_2-\mathbb{E} \alpha,\cdots,\alpha_k-\mathbb{E} \alpha,0]-[\bar\alpha-\mathbb{E}\alpha,\bar\alpha-\mathbb{E}\alpha,\cdots,\bar\alpha-\mathbb{E}\alpha,\bar\alpha],
\end{align*}
where $\bar \alpha={1\over k+1}\sum_{i=1}^k \alpha_i$, thus 
\begin{equation}\label{eq:sigma_d_1}
\begin{aligned}
    &|\sigma_d(\Lambda G)-\sigma_d([\alpha_1-\mathbb{E}\alpha,\alpha_2-\mathbb{E} \alpha,\cdots,\alpha_k-\mathbb{E} \alpha,0])|\\
    \leq &\|[\bar\alpha-\mathbb{E}\alpha,\bar\alpha-\mathbb{E}\alpha,\cdots,\bar\alpha-\mathbb{E}\alpha,\bar\alpha]\|_2\\
    \leq & \|[\bar\alpha-\mathbb{E}\alpha,\bar\alpha-\mathbb{E}\alpha,\cdots,\bar\alpha-\mathbb{E}\alpha,\bar\alpha-\mathbb{E}\alpha]\|_2+\|\mathbb{E}\alpha\|_2\\
    \leq &\sqrt{ k+1}\|{1\over k+1}\sum_{i=1}^k (\alpha_i-\mathbb{E}\alpha)-{1\over k+1}\mathbb{E}\alpha\|_2+\eta\\
    \leq & \|{1\over \sqrt{k+1}}\sum_{i=1}^k (\alpha_i-\mathbb{E}\alpha)\|_2+2\eta
\end{aligned}
\end{equation}
Fitst, we want to use Bernstein Matrix Inequality to bound the $\ell_2$-norm in the last inequality. Denote $\beta_i={1\over \sqrt{k+1}}(\alpha_i-\mathbb{E}\alpha),\;Z=\sum_{i=1}^k\beta_i$, then $\beta_i$ is independent, we also have $$\mathbb{E}\beta_i=0,\;\|\beta_i\|_2\leq {1\over\sqrt{k+1}}(\|\alpha_i\|_2+\|\mathbb{E}\alpha\|_2)\leq {2\eta\over\sqrt{k}},$$ 
which means $\beta_i$ has mean zero and is uniformly bounded, also
\begin{align*}
\nu(Z)&=\max\{\|\mathbb{E}(ZZ^*)\|_2,\|\mathbb{E}(Z^*Z)\|_2\}\\
&=\max\{\|\sum_{i=1}^n\mathbb{E}(\beta_i\beta_i^*)\|_2,\|\sum_{i=1}^n\mathbb{E}(\beta_i^*\beta_i)\|_2\}\\
&={k\over k+1}\max\{\|\mathbb{E}(\alpha_i-\mathbb{E}\alpha)(\alpha_i-\mathbb{E}\alpha)^T\|_2,\mathbb{E}\;\text{tr}((\alpha_i-\mathbb{E}\alpha)(\alpha_i-\mathbb{E}\alpha)^T))\}\\
&<\max\{\|C\|_2,\text{tr}(C)\}\\
&<d\sigma_1(C).
\end{align*}
By assumption, $C$ has finite condition number, and $d\ll k$, by Matrix Bernstein inequality, we are able to bound the spectral norm of $Z$
$$P(\|Z\|_2\geq t)\leq (d+1)\exp(\frac{-t^2}{d\sigma_1(C)+{2\eta t\over3\sqrt{k}}})$$
Let $t=\frac{\sqrt{\sigma_d(C)k}}{4}$,we have
\begin{equation}\label{eq:sigma_d_2}
P(\|Z\|_2\geq \frac{\sqrt{\sigma_d(C)k}}{4})\leq d\exp(-ck).
\end{equation}
Next, equipped with Matrix Bernstein inequality again, we can prove that $\sigma_d([\alpha_1-\mathbb{E}\alpha,\alpha_2-\mathbb{E} \alpha,\cdots,\alpha_k-\mathbb{E} \alpha,0])$ concentrates around $\sigma_d(C)$. Note that $\sigma_d^2([\alpha_1-\mathbb{E}\alpha,\alpha_2-\mathbb{E} \alpha,\cdots,\alpha_k-\mathbb{E} \alpha,0])=\sigma_d(\sum_{i=1}^k(\alpha_i-\mathbb{E}\alpha)(\alpha_i-\mathbb{E}\alpha)^T)$, we consider 
$$|\sigma_d(\sum_{i=1}^k(\alpha_i-\mathbb{E}\alpha)(\alpha_i-\mathbb{E}\alpha)^T)-k\sigma_d(C)|\leq \|\sum_{i=1}^n(\alpha_i-\mathbb{E}\alpha)(\alpha_i-\mathbb{E}\alpha)^T-kC\|_2$$
Similar as what we discussed above, let $Z_j=(\alpha_j-\mathbb{E}\alpha)(\alpha_j-\mathbb{E}\alpha)^T-C,\;j = 1,2,\cdots,k$. It can be verified that $Z_j$ is bounded
\begin{equation*}
\|Z_j\|_2 \leq \|\alpha_j-\mathbb{E}\alpha\|_2^2+\sigma_1(C) \leq 2\eta^2+\sigma_1(C)\equiv c_4.
\end{equation*}
Since $Z_j$ follows i.i.d distribution, we also have $\nu(Z)\leq kc_5$ for some constant $c_5$ which represents the variance of $Z_j$. Applying matrix Bernstein inequality, we obtain
\begin{equation*}
\mathbb{P}\Big( \|\sum\limits_{j=1}^k (\alpha_j-\mathbb{E}\alpha)(\alpha_j-\mathbb{E}\alpha)^T - kC\|_2 \geq t \Big) \leq 2d\exp(-\frac{t^2}{kc_5+\frac{c_4 t}{3}})
\end{equation*}

further, take $t = \frac{3k\sigma_d(C)}{4} $, then with probability over $1-2d\exp(-c_6 k)$ for some constant $c_6$, the following holds
\begin{equation*}
|\sigma_d(\sum_{i=1}^k(\alpha_i-\mathbb{E}\alpha)(\alpha_i-\mathbb{E}\alpha)^T)-k\sigma_d(C)|\leq \|\sum_{i=1}^n(\alpha_i-\mathbb{E}\alpha)(\alpha_i-\mathbb{E}\alpha)^T-kC\|_2< \frac{3k\sigma_d(C)}{4},
\end{equation*}
which leads to 
$$\sigma_d^2([\alpha_1-\mathbb{E}\alpha,\alpha_2-\mathbb{E}\alpha,\cdots,\alpha_k-\mathbb{E}\alpha])=\sigma_d(\sum_{i=1}^k(\alpha_i-\mathbb{E}\alpha)(\alpha_i-\mathbb{E}\alpha)^T)>\frac{k\sigma_d(C)}{4},$$
thus 
\begin{equation}\label{eq:sigma_d_3}
\sigma_d([\alpha_1-\mathbb{E}\alpha,\alpha_2-\mathbb{E}\alpha,\cdots,\alpha_k-\mathbb{E}\alpha])>\frac{\sqrt{k\sigma_d(C)}}{2}.
\end{equation}
Combine \eqref{eq:sigma_d_1}, \eqref{eq:sigma_d_2} and \eqref{eq:sigma_d_3}, we have proved that with probability at least $1-d\exp(-ck)$, $\sigma_d(\Lambda P)\succsim \sqrt{k}$, therefore $\|\Sigma_{\Lambda}^{-1}\| \precsim \frac{1}{\sqrt{k}}$, which further gives $\max\limits_{1\leq j\leq {k+1}} \|V^{\ast}\bold{e}_j\|^2 \precsim \frac{1}{k}$.

Finally, with \eqref{eq:Assump1} in Assumption A, \eqref{eq:Assump3} is also satisfied with the same probability, since

$$\|U V^{\ast}\|_{\infty} \leq \max_j \| U^* \bold{e}_j\|_2\max_l \| V^* \bold{e}_l\|_2\leq \sqrt{\frac{cd}{pk}}.$$
Hence \eqref{eq:Assump3} in Assumption A can also be removed.
\end{proof}

The above discussion is valid for each patch individually, i.e., with probability at least $1-d\exp(-ck_i)\geq 1-d\exp(-c\underline{k})$, \eqref{eq:Assump2} and \eqref{eq:Assump3} hold for any fixed $i=1,2,\cdots n$. By union bound inequality, with probability at least $1-n d\exp(-c\underline{k})$, \eqref{eq:Assump2} and \eqref{eq:Assump3} hold for all the local patches.

Note that $1-nd\exp(-c\underline{k})=1-\exp(-c \underline{k}+\log n)$, here we omit $d$ since it is very small. By Lemma \ref{lemma5}, with probability at least $1-2\exp(-c_1qn)$, ${nq\over 2}\leq k_i\leq 2nq$, for all $i=1,2,\cdots n$. Using the assumption in Theorem \ref{thm4.2}, $qn\geq c_{2}\log n$ for some constant $c_{2}$ larger enough, we can see that with probability over $1-\exp(-c_{3} k)$, the requirrement \eqref{eq:Assump2} and \eqref{eq:Assump3} automatically hold due to i.i.d assumption on the samples, which enable us to remove these assumptions in Theorem \ref{thm4.2}. 

\subsection{Proof of the convergence of $\frac{\hat\lambda_i-\lambda_i^*}{\lambda_i^*}$ as $k\rightarrow \infty$}

When $k$ is large enough, $\min\{k+1,p\}=p,\;\hat\lambda_i=\frac{\sqrt{p}}{\hat\epsilon_i},\;\lambda_i^*=\frac{\sqrt{p}}{\epsilon_i}$, then
\begin{equation*}
    \frac{\hat\lambda_i-\lambda_i^*}{\lambda_i^*} = \frac{\frac{\sqrt{p}}{\hat\epsilon_i}-\frac{\sqrt{p}}{\epsilon_i}}{\frac{\sqrt{p}}{\epsilon_i}} = \frac{\epsilon_i-\hat\epsilon_i}{\hat\epsilon_i} = \frac{\epsilon_i}{\hat \epsilon_i}-1.
\end{equation*}
In order to show $|\frac{\hat\lambda_i-\lambda_i^*}{\lambda_i^*}|\xrightarrow{k\rightarrow\infty}0$, it is sufficient to prove that $\frac{\epsilon_i^2-\hat \epsilon_i^2}{\hat\epsilon_i^2}=\frac{\epsilon_i^2}{\hat\epsilon_i^2}-1\xrightarrow{k\rightarrow\infty} 0$, thus $\frac{\epsilon_i}{\hat\epsilon_i}\xrightarrow{k\rightarrow\infty} 1$, hence $\frac{\hat\lambda_i-\lambda_i^*}{\lambda_i^*}\xrightarrow{k\rightarrow\infty} 0$. Notice that
\begin{align*}
\left|\frac{\epsilon_i^2-\hat \epsilon_i^2}{\hat\epsilon_i^2}\right|&=\Big|\frac{\|R^{(i)}+N^{(i)}\|_F^2-\big((k+1)p\sigma^2+\sum\limits_{j=1}^k\frac{\|X_{i}-X_{i_j}\|_2^4}{4}\bar \Gamma^2(X_i)\big)}{(k+1)p\sigma^2+\sum\limits_{j=1}^k\frac{\|X_{i}-X_{i_j}\|_2^4}{4}\bar \Gamma^2(X_i)}\Big|\\
&\leq \Big|\frac{\big(\|N^{(i)}\|_F^2-(k+1)p\sigma^2\big)+\big(\|R^{(i)}\|_F^2-\sum\limits_{j=1}^k\frac{\|X_{i}-X_{i_j}\|_2^4}{4}\bar \Gamma^2(X_i)\big)+\langle N^{(i)},R^{(i)}\rangle}{kp\sigma^2}\Big|\\
&\leq \Big|\frac{\|N^{(i)}\|_F^2-(k+1)p\sigma^2}{kp\sigma^2}\big|+\Big|\frac{\|R^{(i)}\|_F^2-\sum\limits_{j=1}^k\frac{\|X_{i}-X_{i_j}\|_2^4}{4}\bar \Gamma^2(X_i)}{kp\sigma^2}\Big|+\Big|\frac{\sum_{j=1}^k \langle N^{(i)}_j,R^{(i)}_j\rangle}{(k+1)p\sigma^2}\Big|.
\end{align*}
Since each entry in $N^{(i)}$ follows i.i.d. obeying $\mathcal{N}(0,\sigma^2)$, $\langle N^{(i)}_j,R^{(i)}_j\rangle$ are also i.i.d. with $\mathbb{E}(\langle N^{(i)}_j,R^{(i)}_j\rangle)=0$, by law of large numbers, the first and third term approximates $0$ when $k\rightarrow \infty$. Also, by \eqref{eq:final_err} and \eqref{eq:err3} in \S \ref{sec:curve}, the second term also approximates $0$, thus $\frac{\epsilon_i^2-\hat \epsilon_i^2}{\hat\epsilon_i^2}\xrightarrow{k\rightarrow\infty}0$.

\end{document}